\documentclass[journal]{IEEEtran}
\usepackage[ruled,vlined]{algorithm2e}
\SetAlgoSkip{6pt}
\SetAlgoInsideSkip{1pt}
\usepackage{cuted}
\usepackage{mathrsfs}
\usepackage[caption=false]{subfig} 

\usepackage{algcompatible}
\usepackage{algpseudocode}
\algnewcommand\algorithmicinput{\textbf{Input:}}
\algnewcommand\INPUT{\item[\algorithmicinput]}
\algrenewcommand\algorithmicindent{1.2em}
\usepackage{cite}
\usepackage[subtle]{savetrees}
\usepackage{verbatim}
\usepackage{subcaption}
\usepackage{graphicx}
\usepackage{siunitx}
\usepackage{amsmath,amssymb,amsfonts,bm}
\usepackage{epsf}
\usepackage{tikz}
\usepackage{psfrag}
\usepackage{pstool}
\usepackage{epstopdf}
\usepackage{bbm}
\usepackage{booktabs}
\usepackage{stfloats}
\usepackage{url}
\usepackage[dvips]{epsfig}
\usepackage{amsthm}
\usepackage[usenames,dvipsnames]{pstricks}
\usepackage{lettrine}
\usepackage[dvips]{epsfig}
\usepackage{pst-grad}
\usepackage{pst-plot}
\usepackage{epsfig}
\usepackage{enumerate}
\usepackage{amsbsy}
\usepackage{amssymb}
\usepackage{amsthm}
\usepackage{amscd}
\usepackage{float}
\usepackage[caption = false]{subfig}
\usepackage{stackrel}
\usepackage{authblk}
\usepackage{dsfont}
\usepackage[normalem]{ulem}
\usepackage{hyperref}
\usepackage{xcolor}
\usepackage{enumitem}

\makeatletter
\newcommand{\removelatexerror}{\let\@latex@error\@gobble}
\makeatother
\newtheorem{thm}{Theorem}
\newtheorem{lemm}{Lemma}
\newtheorem{cor}{Corollary}
\newtheorem{rema}{Remark}

\usepackage{stackengine}

\definecolor{mintbg}{rgb}{.63,.79,.95}
\begin{document}

\title{{FedCritic-MIMO: Communication-Efficient Serverless Federated Critic Learning for Massive-MIMO Resource Control in Open and Disaggregated 6G RANs}}

\author{Amin Farajzadeh, \textit{Member}, \textit{IEEE}, Melike Erol-Kantarci, \textit{Fellow}, \textit{IEEE}
\thanks{A. Farajzadeh and M. Erol-Kantarci are with the NETCORE Lab, School of Electrical Engineering and Computer Science, University of Ottawa, Ottawa, ON K1N 6N5, Canada (e-mails: \{amin.farajzadeh, melike.erolkantarci\}@uottawa.ca).}}
\pagenumbering{gobble}
\makeatletter
\patchcmd{\@maketitle}
  {\addvspace{0.5\baselineskip}\egroup}
  {\addvspace{-1.8\baselineskip}\egroup}
  {}
  {}
\makeatother

\maketitle

\begin{abstract} 
This paper proposes FedCritic-MIMO, a communication-efficient serverless federated multi-agent reinforcement learning framework for AI-native resource control across independently deployable cell-level controllers in open and disaggregated 6G radio access networks (RANs). We consider a setting in which neighboring controllers do not rely on a common trainer, retain local actors and personalized critic components, and exchange only compatible shared critic parameters. FedCritic-MIMO targets reuse-$1$ multi-cell massive
multiple-input multiple-output (massive-MIMO) orthogonal frequency-division
multiple access (OFDMA) deployments where distributed RAN controllers jointly manage user scheduling, per-stream power allocation, beamforming, interference, and long-term quality-of-service (QoS) under limited inter-controller signaling. Rather than centralizing training or federating actors, each base station (BS) executes its actor locally, while critic knowledge is exchanged through peer-to-peer coordination over an interference-aware graph. To make such collaboration practical, FedCritic-MIMO combines wireless-aware event triggering, adaptive layer-wise top-$k$ sparse critic exchange with error feedback, and balanced interference-aware fusion. We further establish conditional finite-time stationarity and consensus
guarantees for the proposed balanced, compressed peer-to-peer critic
recursion under a fixed-policy, frozen-target critic-regression model. In strongly interference-coupled reuse-$1$ simulations, FedCritic-MIMO achieves the best performance--communication tradeoff among heuristic, independent-learning, centralized-training, and communication-ablation baselines. In particular, it achieves the highest held-out network throughput, improves user-rate distribution and mean signal-to-interference-plus-noise ratio (SINR), increases QoS satisfaction, and attains the lowest interference cost per delivered bit among learning baselines. It also reduces critic-communication overhead by about $76\%$ relative to uncompressed distributed critic exchange. These results demonstrate that serverless exchange of compatible shared critic parameters can coordinate open and disaggregated RAN controllers without centralized trajectory collection, parameter-server aggregation, or actor homogenization.

\end{abstract}

\begin{IEEEkeywords}
AI-native 6G RAN, Open RAN, federated reinforcement learning, multi-agent reinforcement learning, serverless learning, massive MIMO, resource allocation, communication-efficient learning.
\end{IEEEkeywords}

\IEEEpeerreviewmaketitle

\vspace{-2mm}
\section{Introduction}
\label{sec:introduction}

Ultra-dense sixth-generation (6G) radio access networks (RANs), formally associated with
IMT-2030, are expected to support substantially higher capacity, reliability,
connectivity, and intelligence than current cellular systems
~\cite{6G_IMT,6G_intro}. These requirements must be met not only through
new air-interface capabilities, but also through open, disaggregated, and
programmable RAN architectures in which radio-control functions are no
longer confined to a monolithic BS implementation
~\cite{OpenRANArchitecture,OpenRAN-surv}. In such architectures,
neighboring cell-level controllers may be independently deployed, operate
with local observations and implementation-specific model components, and
lack a common centralized trainer. The architectural challenge is therefore
clear: interference-coupled controllers must learn coordinated radio
policies without centralized trajectory collection, unrestricted model
sharing, or periodic aggregation through a parameter server.

This challenge is particularly important for ultra-dense massive
multiple-input multiple-output (massive-MIMO) orthogonal frequency-division
multiple access (OFDMA) systems. Massive MIMO improves spatial
multiplexing, coverage, and interference suppression, while OFDMA provides
flexible time--frequency resource allocation and compatibility with
multiuser scheduling~\cite{OFDMA-intro,6G-MIMO-OFDMA-intro}. This paper
considers a reuse-$1$ multi-cell massive-MIMO OFDMA downlink in which each
BS serves multiple single-antenna user equipments (UEs) over orthogonal
subcarriers and spatially multiplexes several UEs per subcarrier through
space-division multiple access (SDMA).

In this setting, aggressive frequency reuse improves spectral utilization
but creates strong inter-cell interference. At the same time, multiuser
spatial multiplexing introduces intra-cell SDMA interference whenever the
co-scheduled UEs are not perfectly separated by the selected beamformers.
The network performance is therefore jointly determined by user scheduling,
stream activation, per-stream power allocation, beamforming, inter-cell
interference, intra-cell interference, and the evolving quality-of-service (QoS) states of the
UEs. These decisions affect throughput, cell-edge rates, fairness, and
long-term QoS satisfaction, yielding a high-dimensional, mixed
discrete--continuous, and time-varying control problem
~\cite{massive-MIMO-interf-coupl}.

Classical optimization methods provide useful benchmarks, but they commonly
require accurate global channel and interference information, explicit
system models, and repeated solution of mixed-integer nonconvex programs
~\cite{interference_intro}. These requirements are difficult to satisfy in
ultra-dense RANs, where channels, traffic demands, queue states, and
interference patterns vary rapidly. They are also difficult to reconcile
with open and disaggregated control deployments, where local measurements,
experience, actors, and implementation-specific critic components remain
within independently operated cell-level controllers.

Multi-agent reinforcement learning (MARL) is a natural candidate because
each BS can learn from its local channel, queue, and interference
observations~\cite{MARL_intro,MARL_intro-2}. However, reuse-$1$ cells are
not independent learners: the action of one BS changes the interference,
rates, queues, and future scheduling priorities of neighboring cells.
Purely local MARL can therefore learn unstable or poorly coordinated
policies. Centralized reinforcement learning and
centralized-training decentralized-execution (CTDE) can improve
coordination, but typically require joint observations, coordinated
trajectory collection, centralized critics, or common training
infrastructure~\cite{CTDE_intro}. In the considered open and disaggregated
RAN setting, such requirements reintroduce a common trainer across
controllers that are intended to remain independently deployable.

To address this problem, we develop FedCritic-MIMO, a
communication-efficient serverless federated critic framework for
MARL-based joint scheduling, power allocation, and beamforming in
interference-coupled multi-cell massive-MIMO OFDMA networks. Each
cell-level controller executes its actor locally and retains its local
experience, actor parameters, and personalized critic components.
Collaboration is restricted to a predefined compatible shared critic
subnetwork, which is exchanged directly with neighboring controllers over
the interference graph without a central trainer or parameter server.

FedCritic-MIMO makes critic collaboration both learning-aware and
radio-aware. Critic exchange is triggered according to critic innovation,
queue urgency, and interference intensity. Adaptive layer-wise top-$k$
sparsification with error feedback reduces inter-controller model traffic,
while symmetric interference-aware fusion assigns greater weight to
strongly coupled neighbors. The resulting design coordinates distributed
controllers without centralized trajectory aggregation, periodic
full-model synchronization, or homogenization of cell-specific actors and
personalized critic heads.

\vspace{-3mm}
\subsection{Related Work}

\subsubsection{AI-Native Control in Open and Disaggregated RANs}

Open and disaggregated RAN architectures expose programmable control
functions and interfaces for intelligent radio-resource management
~\cite{OpenRANArchitecture}. Recent work has studied learning-enabled
control for O-RAN resource management, including joint scheduling,
O-RU association, and power allocation~\cite{ORAN-PPO}, as well as
federated learning for O-RAN slicing and resource management
~\cite{FDRL-Han}. These works demonstrate the value of
AI-enabled control in programmable RAN architectures. However, they do not
address serverless critic collaboration among independently deployable
cell-level controllers when exchange is restricted to a compatible shared
model component. They also do not jointly consider massive-MIMO
multiuser scheduling, per-stream power allocation, beamforming, dynamic
inter-cell interference, and long-term per-user QoS.

\subsubsection{Learning-Based Wireless Resource Allocation and Beamforming}

Deep reinforcement learning (DRL) and MARL have been widely studied for
dynamic spectrum access, user scheduling, power control, and interference
management~\cite{nasir2019dynamic,naderializadeh2021resource}. These
methods enable wireless controllers to adapt to time-varying channels and
traffic without repeatedly solving nonconvex optimization problems online.
More recent studies have considered higher-dimensional wireless actions,
including joint resource allocation, beamforming, and beam combining
~\cite{dec-MARL-Carleton,Related-noBeam-MARL}, as well as joint
beamforming and subcarrier allocation under queue-aware delay objectives
~\cite{RIS-MARL}. These works are closely related to the physical-layer
control aspect of this paper, but they do not address
communication-efficient serverless critic learning over the physical
interference graph.

\subsubsection{Centralized and Decentralized MARL}

Centralized and CTDE-based MARL methods use centralized critics, joint
observations, value decomposition, or global training signals to improve
coordination among interacting agents~\cite{CTDE-MARL-RA-1,
CTDE-MARL-RA-2}. However, collecting channel, queue, interference,
scheduling, and beamforming information from multiple BSs creates
substantial signaling and scalability requirements
~\cite{Related-CTDE-offline}. More importantly for the considered
architecture, centralized training assumes a common learning function
across cell-level controllers that are otherwise independently deployable.

Fully decentralized MARL removes this dependency by allowing each BS to
learn from local observations~\cite{nasir2019dynamic,
naderializadeh2021resource,dec-MARL}. Decentralized actor--critic methods
can further exchange critic information through graph-based consensus
~\cite{DecMARLTheory,CommEffDecAC}. Nevertheless, unconditional consensus
incurs persistent model traffic, treats neighboring updates without
considering their time-varying radio relevance, and can dilute locally
useful value information under heterogeneous observations, rewards, and
traffic dynamics~\cite{fully-decMARL}. These limitations motivate
selective critic exchange that preserves local actors and personalized
critic components.

\subsubsection{Federated and Communication-Efficient MARL}

Federated learning supports collaborative model training without exchanging
raw local data, while FedAvg and FedProx provide foundational aggregation
mechanisms under statistical and systems heterogeneity
~\cite{FedAvg,FedProx}. Federated MARL has been applied to wireless and
edge control~\cite{ParhizgarOJVT2025}, including channel assignment and
power control~\cite{MFRL-TWC}, decentralized policy collaboration
~\cite{soft-AC}, and communication--computation co-optimization
~\cite{Pareto-AC}. Most existing methods nevertheless rely on a central
parameter server, periodic aggregation, or actor sharing. These mechanisms
can create communication bottlenecks, homogenize policies across
heterogeneous cells, and assume broader model compatibility than is
available in the considered open and disaggregated control setting.

Gossip-based decentralized optimization provides a basis for serverless
model exchange~\cite{CHOCO}, while error feedback compensates for the bias
introduced by sparse compressors such as top-$k$~\cite{EFSGD}. However,
generic gossip and compression mechanisms do not determine when a critic
update is useful for wireless control. In an interference-coupled RAN, the
value of an update depends not only on parameter innovation, but also on
queue urgency and physical interference coupling.

\subsubsection{Positioning of This Work}

General decentralized actor--critic and compressed-consensus methods
provide the algorithmic foundations for peer-to-peer learning. Our earlier
FedCritic framework introduced serverless critic collaboration for
scheduling and power control in multi-cell OFDMA networks
~\cite{FedCritic-gcom}. FedCritic-MIMO addresses the additional
architectural and radio-control problem considered here: collaborative
learning among independently deployable Open RAN cell-level controllers
that lack a common trainer and exchange only a predefined shared critic
subnetwork.

Within this constraint, FedCritic-MIMO jointly addresses multiuser
massive-MIMO scheduling, per-stream power allocation, structured
beamforming, dynamic inter-cell interference, and long-term per-user QoS.
To the best of our knowledge, it is the first framework to combine
communication-efficient serverless critic collaboration, compatibility-
restricted shared-model exchange, interference-aware model fusion,
multiuser massive-MIMO beamforming, and long-term QoS management.

FedCritic-MIMO provides this functionality through: i) local actors and
personalized critic heads, with peer exchange restricted to the shared
critic subnetwork; ii) utility-aware triggering based on critic innovation,
queue urgency, and interference intensity; iii) adaptive layer-wise
top-$k$ exchange with error feedback; and iv) symmetric
interference-aware balanced fusion.

\vspace{-3mm}
\subsection{Contributions}

The main contributions of this paper are summarized as follows:
\begin{itemize}

\item We address the joint improvement of network throughput, long-term QoS
satisfaction, and interference efficiency across independently deployable
cell-level controllers in a reuse-$1$ open and disaggregated
massive-MIMO OFDMA RAN. The scheduling, power-allocation, and beamforming
problem is formulated as an interference-coupled decentralized partially observable Markov
decision process (Dec–POMDP) that captures
multiuser spatial multiplexing, intra-cell and inter-cell interference,
budget-safe power control, structured beamforming, and virtual-queue-based
QoS constraints.

\item We develop a personalized serverless federated critic architecture
without a common trainer or parameter server. Each controller retains its
local actor, experience, and personalized critic head, while only a
predefined compatible shared critic subnetwork is exchanged with
neighboring controllers.

\item We introduce a communication-efficient collaboration mechanism that
combines utility-aware event triggering, adaptive layer-wise top-$k$
sparsification, and error feedback. The triggering utility jointly captures
critic innovation, queue urgency, and interference intensity, while
symmetric interference-aware balanced fusion prioritizes strongly coupled
neighbors and preserves the network-average shared critic update.

\item We establish conditional finite-time stationarity and consensus bounds
for the balanced shared-critic recursion under a fixed policy and
frozen-head, frozen-target critic-regression objective. Under the stated
regularity and tracking assumptions, the analysis yields an
$\mathcal O(T^{-1/2})+\mathcal O(\log T/T)$ randomized-iterate stationarity
rate.

\item We evaluate FedCritic-MIMO against heuristic, independent-learning,
centralized-training, and communication-ablation baselines. FedCritic-MIMO
achieves the strongest held-out throughput, user-rate, signal-to-interference-plus-noise ratio (SINR), QoS, and
interference-efficiency performance among the considered learning methods,
while reducing critic-communication overhead by approximately $76\%$
relative to uncompressed distributed critic exchange.

\end{itemize}

\vspace{-1mm}
\section{System Model}
\label{sec:system_model}

\subsection{Network Setting and Channel Modeling}

We consider the downlink of an open and disaggregated multi-cell massive-MIMO OFDMA RAN with reuse-$1$ frequency allocation. Each cell is managed by an independently deployable programmable RAN controller indexed by the corresponding BS index. The controller performs cell-level radio-resource control using locally available channel measurements, queue states, interference observations, and QoS information. The controllers do not rely on a common trainer or parameter server; local actors, local experience, and personalized critic components remain private, while only compatible shared critic parameters are eligible for peer-to-peer exchange during training. This architectural constraint defines the information and model-sharing structure considered in this work, while the physical-layer signal model follows the standard multi-cell massive-MIMO OFDMA downlink.

The network consists of $N$ BSs sharing a total bandwidth $B$, partitioned
into $K$ orthogonal subcarriers of equal width $\Delta f=B/K$. Each BS
$n\in\mathcal N\triangleq\{1,\ldots,N\}$ is equipped with $L_n$ transmit
antennas and serves a set of associated single-antenna UEs $\mathcal M_n$.
Time is slotted with index $t=0,1,2,\ldots$, and in each slot, the
controller associated with BS $n$ makes online decisions on user scheduling,
linear precoding/beamforming, and transmit-power allocation.

For notational simplicity, ``BS $n$'' refers to BS $n$ together with its
associated local RAN controller unless the distinction is needed explicitly.

The slot duration is normalized to one. If an explicit slot duration $T_s$
is used, the rate expressions below are in bit/s and the corresponding
per-slot service terms used in the virtual queues should be $T_sR_{n,m}(t)$
and $T_sR^{\min}_{n,m}$. Equivalently, one may interpret all rates below as
normalized per-slot service units.

On each subcarrier $k$, BS $n$ may simultaneously multiplex multiple UEs
through linear precoding. Let $x_{n,k,m}(t)\in\{0,1\}$ indicate whether UE
$m\in\mathcal{M}_n$ is scheduled by BS $n$ on subcarrier $k$ in slot $t$.
The intra-cell spatial multiplexing constraint is

\vspace{-1mm}
\begin{equation}\label{eq:sdma_constraint}
\sum_{m\in\mathcal{M}_n} x_{n,k,m}(t) \le S_n, \qquad \forall n,k,
\end{equation}
where $S_n$ denotes the maximum number of simultaneously served streams on each subcarrier at BS $n$, with $1\le S_n \le \min\{L_n, |\mathcal{M}_n|\}$.

For each scheduled UE $m$ on subcarrier $k$, BS $n$ allocates transmit power $p_{n,k,m}(t)\ge 0$ and a unit-norm beamforming vector $\mathbf{v}_{n,k,m}(t)\in\mathbb{C}^{L_n\times 1}$ satisfying
\vspace{-2mm}
\begin{equation}\label{eq:beam_norm}
\|\mathbf{v}_{n,k,m}(t)\|^2 = 1, \qquad \forall n,k,m.
\end{equation}
For unscheduled streams, $p_{n,k,m}(t)$ is forced to zero by the scheduling--power coupling constraint introduced below, and the corresponding beamforming vector is irrelevant to the transmitted signal.
Let $s_{n,k,m}(t)\sim\mathcal{CN}(0,1)$ denote the information symbol intended for UE $m$. The transmitted signal vector from BS $n$ on subcarrier $k$ is
\vspace{-2mm}
\begin{equation}\label{eq:tx_signal_mimo}
\mathbf{u}_{n,k}(t)
=
\sum_{m\in\mathcal{M}_n}
x_{n,k,m}(t)\sqrt{p_{n,k,m}(t)}\,\mathbf{v}_{n,k,m}(t)\,s_{n,k,m}(t).
\end{equation}
Since $\|\mathbf{v}_{n,k,m}(t)\|^2=1$ and $\mathbb{E}[|s_{n,k,m}(t)|^2]=1$, the average transmit power used by BS $n$ on subcarrier $k$ is $\sum_{m\in\mathcal{M}_n} x_{n,k,m}(t)p_{n,k,m}(t)$. The per-BS power budget is therefore
\vspace{-2mm}
\begin{equation}\label{eq:power_budget_mimo}
\sum_{k=1}^{K}\sum_{m\in\mathcal{M}_n} x_{n,k,m}(t)\,p_{n,k,m}(t)\le P_n, \qquad \forall n.
\end{equation}

For any BS $b$ and UE $m\in\mathcal{M}_n$, let $\mathbf{h}_{b,k,m}^{(n)}(t)\in\mathbb{C}^{L_b\times 1}$ denote the downlink channel vector from BS $b$ to UE $m$ associated with BS $n$ on subcarrier $k$ in slot $t$. For the desired link ($b=n$), we use the shorthand $\mathbf{h}_{n,k,m}(t)\triangleq \mathbf{h}_{n,k,m}^{(n)}(t)$.

We model each BS--UE channel vector as
\begin{equation}\label{eq:vector_channel}
\mathbf{h}_{b,k,m}^{(n)}(t)
=
\sqrt{\beta_{b,m}^{(n)}}\,
\big(\bar{\mathbf{R}}_{b,m}^{(n)}\big)^{1/2}
\tilde{\mathbf{h}}_{b,k,m}^{(n)}(t),
\end{equation}
where $\beta_{b,m}^{(n)}>0$ denotes the linear-scale channel-power gain including large-scale fading, $\bar{\mathbf{R}}_{b,m}^{(n)}\succeq \mathbf 0$ is the normalized spatial correlation matrix satisfying $\frac{1}{L_b}\mathrm{tr}(\bar{\mathbf{R}}_{b,m}^{(n)})=1$, and $\tilde{\mathbf{h}}_{b,k,m}^{(n)}(t)\sim\mathcal{CN}(\mathbf 0,\mathbf I_{L_b})$ captures the small-scale fading. With this normalization, $\mathbb E[\|\mathbf h_{b,k,m}^{(n)}(t)\|^2]=L_b\beta_{b,m}^{(n)}$.

The large-scale fading coefficient is modeled in dB as
\vspace{-2mm}
\begin{equation}\label{eq:beta_model}
\beta_{b,m}^{(n)}[\mathrm{dB}]
=
\beta_0
-
10\alpha \log_{10}\!\left(\frac{d_{b,m}^{(n)}}{d_0}\right)
+
F_{b,m}^{(n)},
\end{equation}
where $d_{b,m}^{(n)}$ is the distance between BS $b$ and UE $m$, $\alpha$ is the path-loss exponent, and $F_{b,m}^{(n)}\sim\mathcal N(0,\sigma_{\mathrm{sh}}^2)$ is the shadowing term in dB. The coefficient used in \eqref{eq:vector_channel} is therefore $\beta_{b,m}^{(n)}=10^{\beta_{b,m}^{(n)}[\mathrm{dB}]/10}$.

The temporal evolution of the small-scale fading is modeled by a first-order complex Gauss--Markov process~\cite{Gauss-Markov-Process} as
\vspace{-2mm}
\begin{equation}\label{eq:vector_jakes}
\tilde{\mathbf{h}}_{b,k,m}^{(n)}(t)
=
\rho\,\tilde{\mathbf{h}}_{b,k,m}^{(n)}(t-1)
+
\sqrt{1-\rho^2}\,\mathbf{w}_{b,k,m}^{(n)}(t),
\end{equation}
where $\mathbf{w}_{b,k,m}^{(n)}(t)\sim\mathcal{CN}(\mathbf{0},\mathbf{I}_{L_b})$ is i.i.d. across all indices, and $0\le \rho <1$ is the temporal correlation coefficient. The process is assumed to be initialized with $\tilde{\mathbf h}_{b,k,m}^{(n)}(0)\sim\mathcal{CN}(\mathbf0,\mathbf I_{L_b})$, which preserves the marginal distribution over time. 
The channel is assumed constant within each slot and evolves across slots according to \eqref{eq:vector_jakes}.

The received signal at UE $m\in\mathcal{M}_n$ on subcarrier $k$ in slot $t$ is
\vspace{-2mm}
\begin{align}\label{eq:rx_signal_mimo}
&y_{n,k,m}(t)
=
\underbrace{
\mathbf{h}_{n,k,m}^{H}(t)\,
x_{n,k,m}(t)\sqrt{p_{n,k,m}(t)}\,\mathbf{v}_{n,k,m}(t)\,s_{n,k,m}(t)
}_{\text{desired signal}}
\nonumber\\
&\hspace{-1mm}+
\underbrace{
\sum_{\substack{j\in\mathcal{M}_n\\ j\neq m}}
\mathbf{h}_{n,k,m}^{H}(t)\,
x_{n,k,j}(t)\sqrt{p_{n,k,j}(t)}\,\mathbf{v}_{n,k,j}(t)\,s_{n,k,j}(t)
}_{\text{intra-cell interference}}
\nonumber\\
&\hspace{-1mm} +
{\small \underbrace{
\sum_{b\neq n}\sum_{j\in\mathcal{M}_b}
\big(\mathbf{h}_{b,k,m}^{(n)}(t)\big)^{H}
x_{b,k,j}(t)\sqrt{p_{b,k,j}(t)}\,\mathbf{v}_{b,k,j}(t)\,s_{b,k,j}(t)
}_{\text{inter-cell interference}}
+
z_{n,k,m}(t)},
\end{align}
where $z_{n,k,m}(t)\sim\mathcal{CN}(0,N_0\Delta f)$ is additive noise and $N_0$ denotes the noise PSD under the adopted complex-baseband convention.

The intra-cell and inter-cell interference powers experienced by UE
$m\in\mathcal M_n$ on subcarrier $k$ are respectively defined as
\begin{align}
I^{\mathrm{intra}}_{n,k,m}(t)
&\triangleq
\sum_{\substack{j\in\mathcal{M}_n\\ j\neq m}}
x_{n,k,j}(t)p_{n,k,j}(t)
\big|\mathbf h_{n,k,m}^{H}(t)\mathbf v_{n,k,j}(t)\big|^2,
\label{eq:intra_interference_mimo}\\
I^{\mathrm{inter}}_{n,k,m}(t)
&\triangleq
\sum_{b\in\mathcal N\setminus\{n\}}\sum_{j\in\mathcal M_b}
x_{b,k,j}(t)p_{b,k,j}(t)
\big|
\big(\mathbf h_{b,k,m}^{(n)}(t)\big)^H
\mathbf v_{b,k,j}(t)
\big|^2 .
\label{eq:inter_interference_mimo}
\end{align}
Assuming single-user decoding and treating residual intra-cell and inter-cell
interference as noise, the resulting downlink SINR is
\vspace{-2mm}
\begin{equation}\label{eq:sinr_mimo}
\mathrm{SINR}_{n,k,m}(t)
=
\frac{
x_{n,k,m}(t)p_{n,k,m}(t)
\big|\mathbf h_{n,k,m}^{H}(t)\mathbf v_{n,k,m}(t)\big|^2
}{
I^{\mathrm{intra}}_{n,k,m}(t)
+
I^{\mathrm{inter}}_{n,k,m}(t)
+
N_0\Delta f
}.
\end{equation}
The instantaneous rate of UE $m\in\mathcal{M}_n$ on subcarrier $k$ is
\vspace{-2mm}
\begin{equation}\label{eq:rate_mimo_subcarrier}
R_{n,k,m}^{\mathrm{sc}}(t)=\Delta f\log_2\!\big(1+\mathrm{SINR}_{n,k,m}(t)\big).
\end{equation}
The per-UE, per-subcarrier-per-BS, and per-BS rates are, respectively,
\vspace{-5mm}
\begin{align}
R_{n,m}(t)
&=
\sum_{k=1}^{K} R_{n,k,m}^{\mathrm{sc}}(t), \label{eq:rate_user_mimo}\\
c_{n,k}(t)
&=
\sum_{m\in\mathcal{M}_n} R_{n,k,m}^{\mathrm{sc}}(t), \label{eq:rate_bs_subcarrier_mimo}\\
c_n(t)
&=
\sum_{k=1}^{K} c_{n,k}(t). \label{eq:rate_bs_mimo}
\end{align}
The minimum-rate parameters $R^{\min}_{n,m}$ are assumed to have the same normalized units as $R_{n,m}(t)$ in the virtual-queue update.
\vspace{-2mm}
\subsection{Optimization Problem Formulation}

The long-term QoS requirement for UE $m\in\mathcal M_n$ is
\vspace{-2mm}
\begin{equation}\label{eq:long_term_qos_mimo}
\liminf_{T\rightarrow\infty}
\frac{1}{T}
\sum_{t=0}^{T-1}
\mathbb E\big[R_{n,m}(t)\big]
\ge R^{\min}_{n,m},
\qquad \forall n,\;m\in\mathcal M_n.
\end{equation}
We assume that the vector of minimum-rate requirements is feasible under the available bandwidth, power, scheduling, and beamforming constraints; otherwise, the corresponding virtual queues will diverge and indicate persistent QoS violation.

In each slot $t$, the local cell controllers jointly selects the user scheduling variables $\mathbf{X}(t)=\{x_{n,k,m}(t)\}$, the per-stream powers $\mathbf{P}(t)=\{p_{n,k,m}(t)\}$, and the beamforming vectors $\mathbf{V}(t)=\{\mathbf{v}_{n,k,m}(t)\}$ to maximize the instantaneous network-wide downlink sum-rate while accounting for the long-term minimum-rate requirements. The long-term QoS constraints are handled through virtual queues, which yields the following queue-weighted per-slot surrogate problem:
\vspace{-2mm}
\begin{subequations}\label{prob:sumrate_slot_mimo}
\begin{align}
\hspace{-3mm}\max_{\mathbf{X}(t),\,\mathbf{P}(t),\,\mathbf{V}(t)}
\;
&
\sum_{n=1}^{N}\sum_{k=1}^{K} c_{n,k}(t)
-
\sum_{n=1}^{N}\sum_{m\in\mathcal{M}_n}
Q_{n,m}(t)\Big(R^{\min}_{n,m}-R_{n,m}(t)\Big)
\label{prob:obj_slot_mimo}\\
\text{s.t.}\quad
& \sum_{m\in\mathcal{M}_n} x_{n,k,m}(t)\le S_n,
\qquad  \hphantom{00000000000}\forall n,k, \label{cons:sdma_slot_mimo}\\
& \sum_{k=1}^{K}\sum_{m\in\mathcal{M}_n} x_{n,k,m}(t)\,p_{n,k,m}(t)\le P_n,
\qquad \hphantom{00}\forall n, \label{cons:power_budget_slot_mimo}\\
& 0\le p_{n,k,m}(t)\le P_n\,x_{n,k,m}(t),
\hphantom{00} \forall n,k,\; m\in\mathcal{M}_n, \label{cons:power_coupling_slot_mimo}\\
& \|\mathbf{v}_{n,k,m}(t)\|^2 = 1,
\qquad  \hphantom{0000000}\forall n,k,\; m\in\mathcal{M}_n, \label{cons:beam_norm_slot_mimo}\\
& x_{n,k,m}(t)\in\{0,1\},
\quad  \hphantom{000000000} \forall n,k,\; m\in\mathcal{M}_n. \label{cons:binary_slot_mimo}
\end{align}
\end{subequations}
After normalizing the rate units, \eqref{prob:sumrate_slot_mimo} is the
unit-weight form of the standard drift-plus-penalty surrogate. Equivalently,
one may multiply the sum-rate term by a control parameter $V_{\mathrm{sr}}>0$;
we use $V_{\mathrm{sr}}=1$ throughout. The term
$-\sum_{n,m}Q_{n,m}(t)R^{\min}_{n,m}$ is action-independent within slot $t$, but it is retained to show the deficit-penalty interpretation.
The virtual queues evolve as
\vspace{-2mm}
\begin{equation}\label{eq:vq_update_mimo}
\hspace{-2mm} Q_{n,m}(t{+}1)
=
\Big[\,Q_{n,m}(t)+R^{\min}_{n,m}-R_{n,m}(t)\,\Big]^+,
\ \ \forall n,\; m\in\mathcal{M}_n,
\vspace{-2mm}
\end{equation}
where $[\cdot]^+\triangleq \max\{\cdot,0\}$. Hence, repeatedly solving \eqref{prob:sumrate_slot_mimo} online steers the system toward satisfying long-term average minimum-rate constraints while still prioritizing instantaneous sum-rate. More precisely, stability of the virtual queues implies satisfaction of the corresponding long-term average-rate constraints in \eqref{eq:long_term_qos_mimo}.

Problem \eqref{prob:sumrate_slot_mimo} is a mixed-integer nonconvex program because of the binary scheduling variables, coupled power and beamforming decisions, and reuse-$1$ intra-cell and inter-cell interference. Solving it exactly in every slot would require timely access to network-wide CSI, queue states, QoS states, and interference-coupling information, followed by repeated centralized optimization. This is impractical in large-scale dense deployments and conflicts with the considered open and disaggregated RAN setting, where independently deployable cell-level controllers retain local observations and control components. Purely isolated local control, however, cannot adequately capture the cross-cell effect of each BS's actions on neighboring rates, queues, and interference. These properties motivate the proposed AI-native learning architecture, in which resource-control execution remains local while compatible shared critic parameters are exchanged selectively among interference-neighbor controllers, without centralized trajectory collection or parameter-server aggregation. 

\vspace{-2mm}

\section{Reformulation as an Interference--Coupled Dec--POMDP}
\label{sec:decpomdp}

The online resource-control problem in \eqref{prob:sumrate_slot_mimo} can be reformulated as a cooperative decentralized partially observable Markov decision process (Dec--POMDP), in which each BS acts as an autonomous agent and jointly learns with the other BSs to maximize a common long-term network utility. The resulting Dec--POMDP is interference-coupled, since the rate achieved by each BS depends not only on its own scheduling, power-allocation, and beamforming decisions, but also on the simultaneous decisions made by neighboring BSs over the same subcarriers.
\vspace{-4mm}
\subsection{Dec--POMDP Definition}

We model the system as the tuple
\vspace{-2mm}
\begin{equation}
\mathcal G=\Big\langle
\mathcal N,\mathcal S,\{\mathcal A_n\}_{n\in\mathcal N},
\{\mathcal O_n\}_{n\in\mathcal N},\mathbb P,\mathbb O,r,\mu_0
\Big\rangle,
\label{eq:decpomdp_tuple}
\end{equation}
where $\mu_0$ is the initial-state distribution and the remaining components are
defined below.
\begin{enumerate}[label=(\alph*)]
    \item \emph{Agents}:
The agent set is the set of BSs, $\mathcal{N}=\{1,\dots,N\}$. Each agent $n\in\mathcal{N}$ independently selects its local scheduling, power-allocation, and beamforming decisions at every slot.

\item \emph{Global state}:
The global state at slot $t$ is defined as
\vspace{-2mm}
\begin{equation}
s(t)=\big(\mathbf{H}(t),\mathbf{Q}(t)\big)\in\mathcal{S},
\end{equation}
where $\mathbf{H}(t)
=\{
\mathbf{h}_{b,k,m}^{(n)}(t)
\}$ collects all direct-link and cross-link channel vectors over all BSs, UEs, and subcarriers, and $\mathbf{Q}(t)=\{Q_{n,m}(t)\}$ contains the virtual queues. Under
\eqref{eq:vector_jakes} and \eqref{eq:vq_update_mimo}, $\{s(t)\}$ is Markov.
If lagged measurements are used as policy inputs, they are either included in an
augmented state or handled through the agent's local action--observation history.

\item \emph{Local observations}:
Since BS $n$ does not have access to the full network state, it observes only a local observation $o_n(t)\in\mathcal O_n$. The joint observation is generated according to the observation kernel $\mathbb O(\mathbf o(t)\mid s(t))$, with $\mathbf o(t)=(o_1(t),\ldots,o_N(t))$. A deterministic observation map $o_n(t)=\Omega_n(s(t))$ is a special case.
A natural observation for BS $n$ is
\vspace{-2mm}
\begin{equation}
o_n(t)
=
\Big(
\mathbf{H}^{\mathrm{loc}}_n(t),
\mathbf{Q}_n(t),
\mathbf{I}_n(t)
\Big)\in\mathcal O_n,
\end{equation}
where
$\mathbf{H}^{\mathrm{loc}}_n(t)=
\{
\mathbf{h}_{n,k,m}(t)\}$ contains the direct-link CSI available at BS $n$, 
$\mathbf{Q}_n(t)
=
\{
Q_{n,m}(t)\}$, contains its local virtual queues, and 
$\mathbf{I}_n(t)$ contains only locally available interference-side information before the current action, such as measured interference powers, estimated cross-link CSI from dominant neighboring BSs, or compact neighbor summaries exchanged over the coordination graph. 
Interference created by the simultaneous actions in current slot
$t$ is not assumed known before those actions are selected.

\item \emph{Local action}:
At each slot $t$, BS $n$ chooses an action
\vspace{-2mm}
\begin{equation}
a_n(t)
=
\Big(
\mathbf{X}_n(t),\mathbf{P}_n(t),\mathbf{V}_n(t)
\Big)
\in\mathcal{A}_n,
\end{equation}
where the feasible set is defined by
\eqref{cons:sdma_slot_mimo}--\eqref{cons:binary_slot_mimo} for BS $n$.
Thus, the local action jointly determines: i) which UEs are scheduled on each subcarrier, ii) how much power is allocated to each active stream, and iii) which beamforming/precoding vector is used for each scheduled UE. When a structured rule such as RZF is adopted, $\mathbf{V}_n(t)$ is computed from the scheduled-user CSI and a lower-dimensional learned regularization action. The policy is masked and normalized as specified in Section~\ref{sec:proposed_method} so that sampled actions satisfy the feasibility constraints.

\item \emph{Joint action}:
The joint action of all BSs is
\vspace{-3mm}
\begin{equation}
\mathbf{a}(t)
=
\big(a_1(t),\dots,a_N(t)\big)
\in
\mathcal{A}
\triangleq
\prod_{n=1}^{N}\mathcal{A}_n.
\end{equation}

\item \emph{State transition kernel}:
The transition kernel
$\mathbb{P}\big(s(t+1)\mid s(t),\mathbf{a}(t)\big)$
is induced jointly by the stochastic channel evolution and the deterministic queue update. Specifically, the channel component evolves according to \eqref{eq:vector_jakes}, while the virtual queues evolve as \eqref{eq:vq_update_mimo}. Since the achieved rate $R_{n,m}(t)$ depends on the SINR in \eqref{eq:sinr_mimo}, the queue evolution at BS $n$ is coupled not only to its own action $a_n(t)$, but also to the simultaneous actions of the interfering BSs. 

\item \emph{Rewards}:
The common team reward combines the instantaneous network sum-rate with a
virtual-queue-weighted QoS term that penalizes rate deficits relative to the
minimum-rate requirements and prioritizes users with larger accumulated
deficits. Specifically,

\vspace{-2mm}
\begin{align}
& \hspace*{-2mm} r\big(s(t),\mathbf{a}(t)\big)
=\nonumber \\
& \hspace*{-3mm} \sum_{n=1}^{N}\sum_{k=1}^{K} c_{n,k}(t)
-
\sum_{n=1}^{N}\sum_{m\in\mathcal{M}_n}
Q_{n,m}(t)\Big(R_{n,m}^{\min}-R_{n,m}(t)\Big).
\label{eq:team_reward_mimo}
\end{align}
For decentralized training, BS $n$ uses the corresponding local contribution
to the team reward,
\vspace{-2mm}
\begin{equation}
\hspace*{-3mm} r_n(t)
=
\sum_{k=1}^{K} c_{n,k}(t)
-
\sum_{m\in\mathcal{M}_n}
Q_{n,m}(t)\Big(R_{n,m}^{\min}-R_{n,m}(t)\Big).
\label{eq:local_reward_mimo}
\end{equation}
The local utilities satisfy
$r(s(t),\mathbf a(t))=\sum_{n=1}^{N}r_n(t)$, but each $r_n(t)$ remains
action-coupled because its rates depend on neighboring BS actions through
inter-cell interference and on co-scheduled streams through intra-cell
interference.
\end{enumerate}
\vspace{-5mm}

\subsection{Interference-Coupled Structure}

Let $\mathcal{B}^{\mathrm{int}}_n\subseteq \mathcal{N}\setminus\{n\}$ denote the set of dominant interferers of BS $n$. Then,
the rate of UE $m\in\mathcal{M}_n$ can be approximated as
\vspace{-2mm}
\begin{equation}
R_{n,m}(t)
\approx
\mathcal R_{n,m}^{\mathrm{loc}}
\Big(
s(t),
a_n(t),
\{a_b(t)\}_{b\in\mathcal{B}^{\mathrm{int}}_n}
\Big),
\end{equation}
where $\mathcal R_{n,m}^{\mathrm{loc}}(\cdot)$ denotes the rate mapping
obtained from \eqref{eq:sinr_mimo} after retaining only the dominant
inter-cell interference terms.
This local-interference structure motivates graph-based coordination and neighbor-limited information exchange in the proposed decentralized learning framework.

The Dec--POMDP is therefore interference-coupled in two senses. First, the immediate reward of each BS depends on neighboring actions through the inter-cell interference terms in \eqref{eq:sinr_mimo}. Second, the queue dynamics are coupled across BSs through the achieved rates, since stronger interference from neighboring cells increases the virtual-queue backlog of the affected UEs and thus changes future scheduling priorities.

\vspace{-4mm}
\subsection{Decentralized Control Objective}

Let $\pi_n(a_n\mid o_n)$ denote the stochastic policy of BS $n$, and let the joint policy factorize as
\vspace{-4mm}
\begin{equation}
\boldsymbol{\pi}(\mathbf{a}(t)\mid\mathbf{o}(t))
=
\prod_{n=1}^{N} \pi_n(a_n(t)\mid o_n(t)),
\end{equation}
where $\mathbf{o}(t)=(o_1(t),\dots,o_N(t))$. The decentralized control objective is to learn a set of policies $\{\pi_n\}_{n=1}^{N}$ that maximizes the long-term expected team utility
\vspace{-4mm}

\begin{equation}
\max_{\{\pi_n\}_{n=1}^{N}}
\;
\liminf_{T\rightarrow\infty}
\frac{1}{T}
\mathbb{E}_{\boldsymbol{\pi}}
\left[
\sum_{t=0}^{T-1}
r\big(s(t),\mathbf{a}(t)\big)
\right].
\label{eq:avg_return_mimo}
\end{equation}
Because the virtual queues are part of the system state and the reward in \eqref{eq:team_reward_mimo} is derived from the Lyapunov-drift reformulation, optimizing \eqref{eq:avg_return_mimo} encourages policies that jointly improve network throughput and stabilize the virtual queues, which in turn promotes satisfaction of the long-term minimum-rate requirements.

\vspace{-2mm}

\section{Proposed Communication-Efficient Serverless Federated Critic Learning}
\label{sec:proposed_method}

This section presents the proposed communication-efficient serverless federated critic learning framework for the interference-coupled Dec--POMDP in Section~\ref{sec:decpomdp}. The key idea is to retain the fully decentralized graph-based collaboration structure of FedCritic while replacing periodic full-parameter gossip with a utility-aware and communication-efficient critic-exchange mechanism. In particular, critic communication is driven not only by local critic drift, but also by wireless-control relevance, namely local queue urgency and interference coupling. Each BS maintains its own local actor and critic, performs local actor--critic updates from locally collected trajectories, and exchanges compressed critic-side information only with neighboring BSs over the coordination graph when the local critic update is deemed sufficiently important. This reduces communication overhead while preserving the benefits of serverless federated value learning in the interference-coupled multi-cell massive-MIMO OFDMA setting.

For clarity, we consider synchronous training rounds and reliable peer-to-peer communication over a fixed undirected coordination graph. Critic collaboration is used only during training; after training, each BS executes its local actor using only its local observation. Superscript $t$ denotes a federated training round, whereas $\tau_i$ denotes the $i$th wireless slot in the rollout collected at round $t$.

\vspace{-3mm}
\subsection{Design Rationale}

Compared with the single-antenna OFDMA setting~\cite{FedCritic-gcom}, the massive-MIMO extension substantially enlarges the local state and action spaces due to per-subcarrier multiuser scheduling, per-stream power allocation, and beamforming design. As a result, critic models become larger and more sensitive to local environmental heterogeneity. In such a setting, periodic full-parameter gossip is inefficient, since neighboring BSs may repeatedly exchange highly redundant critic information even when their local critic states have changed only marginally.

More importantly, in interference-coupled wireless control, not all critic updates are equally valuable. A critic change occurring at a BS with low queue pressure and weak interference coupling may have little impact on network performance, whereas even a moderate critic change at a BS operating under high QoS pressure or strong interference coupling may be highly relevant to its neighbors. This motivates a wireless-aware communication policy.

To address these issues, we propose a communication-efficient serverless critic-learning mechanism based on three principles:
\begin{enumerate}
    \item \emph{critic-only federation}: only critic-side information is exchanged across BSs, while actors remain local;
    \item \emph{utility-aware event-triggered communication}: a BS communicates only when its critic update is sufficiently important, as determined jointly by critic innovation, local queue urgency, and interference intensity;
    \item \emph{compressed balanced interference-aware fusion}: when communication is triggered, only a compressed shared-critic increment is exchanged and fused over the interference graph using symmetric interference-relevance weights.
\end{enumerate}

\vspace{-4mm}
\subsection{Local Actor--Critic Parameterization and Objectives}

For BS $n\in\mathcal N$, let $\theta_n^t$ denote the local actor parameters. To allow value-function personalization while preserving parameter-space compatibility, write the critic as
\vspace{-3mm}
\begin{equation}
V_{\vartheta_n^t}(o)
=
h_{\omega_n^t}\!\left(f_{\psi_n^t}(o)\right),
\qquad
\vartheta_n^t=(\psi_n^t,\omega_n^t),
\label{eq:personalized_critic}
\end{equation}
where $\psi_n^t\in\mathbb R^{d_c}$ denotes the shared critic parameters and $\omega_n^t$ denotes a BS-specific value head. All BSs employ compatible shared critic architectures and use the common initialization
\vspace{-2mm}
\begin{equation}
\psi_1^0=\cdots=\psi_N^0=\psi^0.
\label{eq:common_critic_initialization}
\end{equation}
Only $\psi_n^t$ is exchanged; $\theta_n^t$ and $\omega_n^t$ remain local.

BS $n$ uses the local utility in \eqref{eq:local_reward_mimo}, denoted by
$
\bar r_n(\tau)\triangleq r_n(\tau).
$
Its critic approximates the discounted local value
\vspace{-3mm}
\begin{equation}
V_{\vartheta_n}(o_n(\tau))
\approx
\mathbb E_{\boldsymbol\pi}\!\left[
\sum_{\ell=0}^{\infty}
\gamma^\ell\bar r_n(\tau+\ell)
\,\middle|\,
o_n(\tau)
\right],
\label{eq:local_value_function_final}
\end{equation}
where $\gamma\in(0,1)$. Since $o_n(\tau)$ is generally a partial observation, \eqref{eq:local_value_function_final} is a local value approximation rather than the full-state Markov value. The discounted objective is used as a tractable training surrogate for the average-return control objective in \eqref{eq:avg_return_mimo}; no exact equivalence between the two criteria is assumed.

At round $t$, BS $n$ collects the on-policy batch
\vspace{-2mm}
\begin{equation}
\mathcal D_n^t
=
\left\{
\big(
o_n^t(\tau_i),
a_n^t(\tau_i),
\bar r_n^t(\tau_i),
o_n^t(\tau_i+1)
\big)
\right\}_{i=0}^{H_n^t-1}.
\label{eq:local_batch_final}
\vspace{-2mm}
\end{equation}
Using the current critic, define
\vspace{-2mm}
\begin{align}
\delta_{n,i}^t
&=
\bar r_n^t(\tau_i)
+
\gamma V_{\vartheta_n^t}\!\left(o_n^t(\tau_i+1)\right)
-
V_{\vartheta_n^t}\!\left(o_n^t(\tau_i)\right),
\label{eq:td_residual_final}\\
\widehat A_{n,i}^t
&=
\sum_{\ell=0}^{H_n^t-i-1}
(\gamma\lambda)^\ell
\delta_{n,i+\ell}^t,
\qquad
\lambda\in[0,1],
\label{eq:gae_final}\\
\widehat R_{n,i}^t
&=
\widehat A_{n,i}^t
+
V_{\vartheta_n^t}\!\left(o_n^t(\tau_i)\right).
\label{eq:lambda_return_final}
\end{align}
The targets $\{\widehat R_{n,i}^t\}$ are treated as constants during critic optimization. The critic regression loss is
\vspace{-3mm}
\begin{equation}
\mathcal L_n^{\mathrm c}
(\vartheta_n;\mathcal D_n^t)
=
\frac{1}{H_n^t}
\sum_{i=0}^{H_n^t-1}
\left(
V_{\vartheta_n}\!\left(o_n^t(\tau_i)\right)
-
\widehat R_{n,i}^t
\right)^2.
\label{eq:critic_loss_final}
\end{equation}

For the actor, define
\vspace{-4mm}
\begin{equation}
\varrho_{n,i}^t(\theta_n)
=
\frac{
\pi_{\theta_n}\!\left(
a_n^t(\tau_i)\mid o_n^t(\tau_i)
\right)
}{
\pi_{\theta_n^t}\!\left(
a_n^t(\tau_i)\mid o_n^t(\tau_i)
\right)
}.
\label{eq:ppo_ratio_final}
\end{equation}
Then the local PPO loss is calculated as
\vspace{-2mm}
\begin{align}
\mathcal L_n^{\mathrm a}
(\theta_n;\mathcal D_n^t)
={}&
-\frac{1}{H_n^t}
\sum_{i=0}^{H_n^t-1}
\min\!\left\{
\varrho_{n,i}^t(\theta_n)\widehat A_{n,i}^t,
\right.
\nonumber\\[-1mm]
&\left.
\operatorname{clip}\!\left(
\varrho_{n,i}^t(\theta_n),
1-\epsilon_\pi,
1+\epsilon_\pi
\right)
\widehat A_{n,i}^t
\right\}
\nonumber\\
&-
\frac{\beta_H}{H_n^t}
\sum_{i=0}^{H_n^t-1}
\mathcal H\!\left(
\pi_{\theta_n}(\cdot\mid o_n^t(\tau_i))
\right).
\label{eq:actor_loss_final}
\end{align}

After local critic optimization, BS $n$ obtains
$(\widetilde\psi_n^{t+1},\widetilde\omega_n^{t+1})$.
For the convergence analysis, one effective update of the shared critic parameters is written as
\vspace{-2mm}
\begin{equation}
\widetilde\psi_n^{t+1}
=
\psi_n^t-\eta_{\mathrm c}g_n^t,
\qquad
g_n^t
\triangleq
\nabla_{\psi}
\mathcal L_n^{\mathrm c}
(\vartheta_n^t;\mathcal D_n^t),
\label{eq:local_critic_update_final}
\end{equation}
where $\eta_{\mathrm c}>0$. Multiple local critic steps are covered only when their effective direction satisfies the stochastic-gradient conditions stated in Section~\ref{subsec:convergence_analysis}. The actor and the personalized critic head remain local.
\vspace{-3mm}
\subsection{Coordination Graph and Wireless-Relevance Scores}

Let $\mathcal G_{\mathrm c}=(\mathcal N,\mathcal E)$ be the undirected coordination graph and
\vspace{-2mm}
\begin{equation}
\mathcal B_n
=
\left\{
b\in\mathcal N\setminus\{n\}:
(n,b)\in\mathcal E
\right\}
\label{eq:neighbor_set}
\end{equation}
be the neighbor set of BS $n$. Critic collaboration is peer-to-peer; no parameter server is used.

For $b\in\mathcal B_n$, define the interference contributed by BS $b$ to stream $(k,m)$ served by BS $n$ during rollout slot $\tau_i$ as
\vspace{-2mm}
{\small\begin{equation}
\hspace*{-3mm}I_{nb,k,m}^t(\tau_i)
=
\sum_{j\in\mathcal M_b}
x_{b,k,j}^t(\tau_i)
p_{b,k,j}^t(\tau_i)
\left|
\left(
\mathbf h_{b,k,m}^{(n),t}(\tau_i)
\right)^H
\mathbf v_{b,k,j}^t(\tau_i)
\right|^2.\
\label{eq:neighbor_interference_contribution}
\end{equation}}

Let
$\mathcal S_n^t(\tau_i)
=
\left\{
(k,m):
x_{n,k,m}^t(\tau_i)=1
\right\}$.
The normalized queue urgency, total interference intensity, and directional neighbor relevance are
\vspace{-2mm}
{\small \begin{align}
&\bar Q_n^t
=
\frac{1}{H_n^t|\mathcal M_n|}
\sum_{i=0}^{H_n^t-1}
\sum_{m\in\mathcal M_n}
\frac{
Q_{n,m}^t(\tau_i)
}{
Q_{n,m}^t(\tau_i)+q_0
},
\qquad q_0>0,
\label{eq:queue_urgency_score}\\
&\bar I_n^t
=\nonumber\\&
\frac{1}{H_n^t}
\sum_{i=0}^{H_n^t-1}
\frac{1}{
\max\{1,|\mathcal S_n^t(\tau_i)|\}
}
\sum_{(k,m)\in\mathcal S_n^t(\tau_i)}
\frac{
\sum_{b\in\mathcal B_n}
I_{nb,k,m}^t(\tau_i)
}{
\sum_{b\in\mathcal B_n}
I_{nb,k,m}^t(\tau_i)
+
N_0\Delta f
},
\label{eq:local_interference_score}\\
&\bar\kappa_{nb}^t
=\nonumber\\&
\frac{1}{H_n^t}
\sum_{i=0}^{H_n^t-1}
\frac{1}{
\max\{1,|\mathcal S_n^t(\tau_i)|\}
}
\sum_{(k,m)\in\mathcal S_n^t(\tau_i)}
\frac{
I_{nb,k,m}^t(\tau_i)
}{
\sum_{b'\in\mathcal B_n}
I_{nb',k,m}^t(\tau_i)
+
N_0\Delta f
}.
\label{eq:neighbor_interference_relevance}
\end{align}}
An empty scheduled-stream set contributes zero. Hence,
$
0\leq\bar Q_n^t<1
$
and
$
0\leq\bar I_n^t,\bar\kappa_{nb}^t\leq1.
$
\vspace{-3mm}
\subsection{Utility-Aware Event Trigger and Compressed Critic Exchange}

Let $\widehat\psi_n^t$ denote the public reconstruction of BS $n$'s shared critic parameters maintained consistently by BS $n$ and its neighbors. Importantly, $\widehat\psi_n^t$ is a communication-side reference and need not equal the current local critic parameters after neighbor fusion.

The utility-aware trigger score is
\vspace{-2mm}
\begin{equation}
\Gamma_n^t
=
\frac{
\left\|
\widetilde\psi_n^{t+1}
-
\widehat\psi_n^t
\right\|_2
}{
\left\|
\widehat\psi_n^t
\right\|_2
+
\epsilon_{\mathrm{tr}}
}
\left(
1+\alpha_Q\bar Q_n^t+\alpha_I\bar I_n^t
\right),
\label{eq:trigger_score}
\end{equation}
where
$
\epsilon_{\mathrm{tr}}>0
$
and
$
\alpha_Q,\alpha_I\geq0.
$
The communication decision is
\vspace{-2mm}
\begin{equation}
\xi_n^t
=
\mathbf1\left\{
\Gamma_n^t\geq\tau_{\mathrm{th}}^t
\right\},
\label{eq:trigger_rule_final}
\end{equation}
where $\tau_{\mathrm{th}}^t>0$ is a prescribed threshold. A decreasing threshold promotes asymptotically tighter tracking, whereas a positive threshold floor trades residual disagreement for persistent communication savings, as commonly observed in event-triggered learning and federated optimization schemes~\cite{EventFL-TSP}.

Partition the shared critic parameters into $L_c$ layers with dimensions
$
d_1,\ldots,d_{L_c}
$
and
$
\sum_{\ell=1}^{L_c}d_\ell=d_c.
$
At round $t$, layer $\ell$ retains
$
k_{n,\ell}^t
$
coordinates, where
$
1\leq k_{n,\ell}^t\leq d_\ell.
$
Let
$
\mathcal C_{k_{n,\ell}^t}
$
retain the $k_{n,\ell}^t$ largest-magnitude entries of its argument. Define the layer-wise compressor
\vspace{-2mm}
\begin{equation}
\mathcal C_n^t(\mathbf x)
=
\operatorname{col}_{\ell=1}^{L_c}
\left\{
\mathcal C_{k_{n,\ell}^t}(\mathbf x_\ell)
\right\}.
\label{eq:layerwise_compressor}
\end{equation}
It satisfies
\vspace{-3mm}
\begin{equation}
\left\|
\mathcal C_n^t(\mathbf x)-\mathbf x
\right\|_2^2
\leq
(1-\delta_c)\|\mathbf x\|_2^2,
\qquad
\delta_c
\triangleq
\inf_{n,t,\ell}
\frac{k_{n,\ell}^t}{d_\ell}
>0.
\label{eq:compressor_contraction}
\end{equation}

Initialize
$
\widehat\psi_n^0=\widetilde\psi_n^0=\psi^0
$
and
$
\mathbf e_n^0=\mathbf0.
$
Then the variables satisfy
\vspace{-2mm}
\begin{equation}
\mathbf e_n^t
=
\widetilde\psi_n^t-\widehat\psi_n^t.
\label{eq:error_tracking_identity}
\end{equation}
Accordingly, the error-compensated discrepancy and transmitted sparse message are
\vspace{-2mm}
\begin{align}
\mathbf d_n^t
&=
\widetilde\psi_n^{t+1}-\widetilde\psi_n^t+\mathbf e_n^t
=
\widetilde\psi_n^{t+1}
-
\widehat\psi_n^t,
\label{eq:corrected_increment_final}\\
\boldsymbol\Delta_n^t
&=
\xi_n^t
\mathcal C_n^t(\mathbf d_n^t).
\label{eq:compressed_delta_final}
\end{align}
The public reconstruction and residual memory are updated as
\vspace{-2mm}
\begin{align}
\widehat\psi_n^{t+1}
&=
\widehat\psi_n^t
+
\boldsymbol\Delta_n^t,
\label{eq:public_critic_reference_final}\\
\mathbf e_n^{t+1}
&=
\mathbf d_n^t
-
\boldsymbol\Delta_n^t
=
\widetilde\psi_n^{t+1}
-
\widehat\psi_n^{t+1}.
\label{eq:error_feedback_final}
\end{align}
Thus, all unsent coordinates remain in the public-reconstruction error and are reconsidered in later rounds.

\begin{lemm}[One-step tracking-error control]
\label{lem:tracking_error}
For every BS $n$ and round $t$,
\vspace{-4mm}
\begin{equation}
\|\mathbf e_n^{t+1}\|_2^2
\leq
\begin{cases}
\displaystyle
(\tau_{\mathrm{th}}^t)^2
\left(
\|\widehat\psi_n^t\|_2+\epsilon_{\mathrm{tr}}
\right)^2,
& \xi_n^t=0,\\[2mm]
\displaystyle
(1-\delta_c)
\|\mathbf d_n^t\|_2^2,
& \xi_n^t=1.
\end{cases}
\label{eq:tracking_error_bound}
\end{equation}
\end{lemm}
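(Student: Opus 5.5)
The plan is to start from the error-feedback identity \eqref{eq:error_feedback_final}, $\mathbf e_n^{t+1}=\mathbf d_n^t-\boldsymbol\Delta_n^t$, and treat the two values of the trigger indicator $\xi_n^t$ in \eqref{eq:trigger_rule_final} separately. First we fix the form of $\mathbf d_n^t$. By the tracking identity \eqref{eq:error_tracking_identity} and the definition \eqref{eq:corrected_increment_final}, we have $\mathbf d_n^t=\widetilde\psi_n^{t+1}-\widehat\psi_n^t$. This is exactly the numerator of the trigger score \eqref{eq:trigger_score}. The identity \eqref{eq:error_tracking_identity} itself follows by a short induction: it holds at $t=0$ because $\widehat\psi_n^0=\widetilde\psi_n^0=\psi^0$ and $\mathbf e_n^0=\mathbf 0$, and \eqref{eq:public_critic_reference_final}--\eqref{eq:error_feedback_final} propagate it. This structural observation does the main work in both cases.

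\emph{Case $\xi_n^t=0$.} Here $\boldsymbol\Delta_n^t=\mathbf 0$ by \eqref{eq:compressed_delta_final}, so $\mathbf e_n^{t+1}=\mathbf d_n^t$. Not triggering means $\Gamma_n^t<\tau_{\mathrm{th}}^t$. Since $\alpha_Q,\alpha_I\ge 0$ and $\bar Q_n^t,\bar I_n^t\ge 0$, the multiplicative factor $1+\alpha_Q\bar Q_n^t+\alpha_I\bar I_n^t$ is at least $1$. Hence
\[
\frac{\|\mathbf d_n^t\|_2}{\|\widehat\psi_n^t\|_2+\epsilon_{\mathrm{tr}}}\le\Gamma_n^t<\tau_{\mathrm{th}}^t .
\]
Because $\epsilon_{\mathrm{tr}}>0$, the denominator is strictly positive. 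Multiplying through and squaring both nonnegative sides gives the first branch of \eqref{eq:tracking_error_bound}.

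\emph{Case $\xi_n^t=1$.} Here $\boldsymbol\Delta_n^t=\mathcal C_n^t(\mathbf d_n^t)$, so $\|\mathbf e_n^{t+1}\|_2^2=\|\mathcal C_n^t(\mathbf d_n^t)-\mathbf d_n^t\|_2^2$, and the contraction property \eqref{eq:compressor_contraction} gives the second branch directly. For completeness I would briefly justify \eqref{eq:compressor_contraction}, since it is asserted rather than proved. Layer-wise top-$k$ discards the $d_\ell-k_{n,\ell}^t$ smallest-magnitude entries of $\mathbf x_\ell$. Their squared mass is at most $(1-k_{n,\ell}^t/d_\ell)\|\mathbf x_\ell\|_2^2$, because the average of the discarded squares is no larger than the overall average. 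Summing over $\ell$ and bounding each ratio $k_{n,\ell}^t/d_\ell$ below by $\delta_c$ yields the stated inequality.

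There is no real obstacle here. The only point needing care is making sure the trigger numerator coincides with $\|\mathbf d_n^t\|_2$, which depends on the tracking identity, and handling the strict versus non-strict inequality in the non-triggered case.
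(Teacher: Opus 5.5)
Your proposal is correct and follows the paper's proof. It uses the same two-case split on $\xi_n^t$: in the non-triggered case, $\mathbf e_n^{t+1}=\mathbf d_n^t$ together with the factor $1+\alpha_Q\bar Q_n^t+\alpha_I\bar I_n^t\ge 1$, and in the triggered case, the top-$k$ contraction \eqref{eq:compressor_contraction}; your induction for the tracking identity \eqref{eq:error_tracking_identity} and your short justification of the layer-wise contraction are correct and fill in steps the paper only asserts.
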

\begin{IEEEproof}
See Appendix~\ref{app:proof_lem1}.
\end{IEEEproof}

\vspace{-2mm}
\subsection{Balanced Interference-Aware Serverless Fusion}

Because $\bar\kappa_{nb}^t$ and $\bar\kappa_{bn}^t$ are generally different, we define the symmetric edge score as
\begin{equation}
s_{nb}^t
=
\epsilon_w
+
\bar\kappa_{nb}^t
+
\bar\kappa_{bn}^t,
\qquad
(n,b)\in\mathcal E,
\label{eq:symmetric_interference_score}
\end{equation}
where $\epsilon_w>0$, and let
$s_n^t
=
\sum_{j\in\mathcal B_n}s_{nj}^t$.
For $b\in\mathcal B_n$, define
\vspace{-2mm}
\begin{equation}
w_{nb}^t
=
(1-\omega_{\mathrm{self}})
\frac{
s_{nb}^t
}{
\max\{s_n^t,s_b^t\}
},
\qquad
0<\omega_{\mathrm{self}}<1,
\label{eq:neighbor_fusion_weights}
\end{equation}
and
\vspace{-2mm}
\begin{equation}
w_{nn}^t
=
1-
\sum_{b\in\mathcal B_n}
w_{nb}^t.
\label{eq:self_fusion_weight}
\end{equation}
All other entries are zero. Since
$
s_{nb}^t=s_{bn}^t,
$
the matrix
$
\mathbf W^t=[w_{nb}^t]
$
is symmetric and row stochastic, hence doubly stochastic. Moreover,
$
w_{nn}^t\geq\omega_{\mathrm{self}}.
$

The shared critic parameters are fused according to
\vspace{-2mm}
\begin{equation}
\psi_n^{t+1}
=
\widetilde\psi_n^{t+1}
+
\sum_{b\in\mathcal B_n}
w_{nb}^t
\left(
\widehat\psi_b^{t+1}
-
\widehat\psi_n^{t+1}
\right),
\label{eq:critic_fusion_final}
\vspace{-2mm}
\end{equation}
while the personalized head remains local as
$\omega_n^{t+1}
=
\widetilde\omega_n^{t+1}$.

\vspace{-3mm}
\subsection{Integrated Critic Recursion}

Define
$\boldsymbol\Psi^t
=
[\psi_1^t,\ldots,\psi_N^t]$,
$\widetilde{\boldsymbol\Psi}^{t+1}
=
[\widetilde\psi_1^{t+1},\ldots,\widetilde\psi_N^{t+1}]$,
$\widehat{\boldsymbol\Psi}^{t+1}
=
[\widehat\psi_1^{t+1},\ldots,\widehat\psi_N^{t+1}]$, and
$\mathbf G^t
=
[g_1^t,\ldots,g_N^t]$. Then~\eqref{eq:critic_fusion_final} is equivalently the following equation:
\vspace{-1mm}
\begin{equation}
\boldsymbol\Psi^{t+1}
=
\widetilde{\boldsymbol\Psi}^{t+1}
+
\widehat{\boldsymbol\Psi}^{t+1}
\left(
(\mathbf W^t)^T-\mathbf I_N
\right).
\label{eq:integrated_critic_rule_final}
\end{equation}
Using
$
\widetilde{\boldsymbol\Psi}^{t+1}
=
\boldsymbol\Psi^t-\eta_{\mathrm c}\mathbf G^t,
$
we obtain
\vspace{-2mm}
\begin{equation}
\boldsymbol\Psi^{t+1}
=
\left(
\boldsymbol\Psi^t-\eta_{\mathrm c}\mathbf G^t
\right)
(\mathbf W^t)^T
+
\mathbf R^t,
\label{eq:perturbed_critic_recursion}
\end{equation}
where
\vspace{-3mm}
\begin{align}
\mathbf R^t
=
\left(
\widetilde{\boldsymbol\Psi}^{t+1}
-
\widehat{\boldsymbol\Psi}^{t+1}
\right)
\left(
\mathbf I_N-(\mathbf W^t)^T
\right).
\label{eq:communication_perturbation}
\end{align}

\begin{lemm}[Average preservation and perturbation control]
\label{lem:average_preservation}
Let
\vspace{-3mm}
\begin{equation}
\varepsilon_t^2
\triangleq
\frac{1}{N}
\mathbb E
\left[
\left\|
\widetilde{\boldsymbol\Psi}^{t+1}
-
\widehat{\boldsymbol\Psi}^{t+1}
\right\|_{\mathrm F}^2
\right]
=
\frac1N\sum_{n=1}^{N}\mathbb E\|\mathbf e_n^{t+1}\|_2^2.
\label{eq:tracking_energy}
\end{equation}
Then
\vspace{-5mm}
\begin{align}
\mathbf R^t\mathbf1
&=
\mathbf0,
\label{eq:average_preserving_error}\\
\bar\psi^{t+1}
&=
\bar\psi^t
-
\eta_{\mathrm c}\bar g^t,
\qquad
\bar g^t
\triangleq
\frac1N\sum_{n=1}^{N}g_n^t,
\label{eq:average_critic_recursion}\\
\frac1N
\mathbb E\|\mathbf R^t\|_{\mathrm F}^2
&\leq
4\varepsilon_t^2.
\label{eq:perturbation_tracking_bound}
\end{align}
\end{lemm}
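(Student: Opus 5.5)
The proof rests on three observations: $\mathbf W^t$ is doubly stochastic, the error-feedback identity \eqref{eq:error_feedback_final} holds, and the Frobenius norm is submultiplicative. Write $\mathbf E^{t+1}\triangleq\widetilde{\boldsymbol\Psi}^{t+1}-\widehat{\boldsymbol\Psi}^{t+1}=[\mathbf e_1^{t+1},\ldots,\mathbf e_N^{t+1}]$. By \eqref{eq:error_feedback_final}, its $n$th column is exactly $\mathbf e_n^{t+1}$. This gives the equality in \eqref{eq:tracking_energy} immediately, since $\|\mathbf E^{t+1}\|_{\mathrm F}^2=\sum_n\|\mathbf e_n^{t+1}\|_2^2$. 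With this notation, $\mathbf R^t=\mathbf E^{t+1}(\mathbf I_N-(\mathbf W^t)^T)$.

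For \eqref{eq:average_preserving_error}, I use that $\mathbf W^t$ is symmetric and row stochastic, as established right after \eqref{eq:self_fusion_weight}. Hence $(\mathbf W^t)^T\mathbf1=\mathbf W^t\mathbf1=\mathbf1$, so $(\mathbf I_N-(\mathbf W^t)^T)\mathbf1=\mathbf0$, and therefore $\mathbf R^t\mathbf1=\mathbf0$.

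For \eqref{eq:average_critic_recursion}, I multiply the perturbed recursion \eqref{eq:perturbed_critic_recursion} on the right by $\frac1N\mathbf1$. Note that $\bar\psi^t=\frac1N\boldsymbol\Psi^t\mathbf1$. Using $(\mathbf W^t)^T\mathbf1=\mathbf1$ again, the term $(\boldsymbol\Psi^t-\eta_{\mathrm c}\mathbf G^t)(\mathbf W^t)^T\mathbf1/N$ reduces to $\bar\psi^t-\eta_{\mathrm c}\bar g^t$, and the $\mathbf R^t$ term vanishes by the previous step. One point needs checking: \eqref{eq:perturbed_critic_recursion} follows from \eqref{eq:integrated_critic_rule_final}. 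Substituting $\widehat{\boldsymbol\Psi}^{t+1}=\widetilde{\boldsymbol\Psi}^{t+1}-\mathbf E^{t+1}$ gives $\boldsymbol\Psi^{t+1}=\widetilde{\boldsymbol\Psi}^{t+1}(\mathbf W^t)^T+\mathbf E^{t+1}(\mathbf I_N-(\mathbf W^t)^T)$, which matches \eqref{eq:perturbed_critic_recursion} and \eqref{eq:communication_perturbation}. I also need to confirm that \eqref{eq:critic_fusion_final} is the column form of \eqref{eq:integrated_critic_rule_final}. This holds because $\sum_b w_{nb}^t(\widehat\psi_b-\widehat\psi_n)=\sum_b w_{nb}^t\widehat\psi_b-\widehat\psi_n$ by row stochasticity.

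For \eqref{eq:perturbation_tracking_bound}, I bound $\|\mathbf R^t\|_{\mathrm F}\le\|\mathbf E^{t+1}\|_{\mathrm F}\,\|\mathbf I_N-(\mathbf W^t)^T\|_2$. Since $\mathbf W^t$ is doubly stochastic, $\|\mathbf W^t\|_2\le1$; this follows from $\|\cdot\|_2\le\sqrt{\|\cdot\|_1\|\cdot\|_\infty}=1$, or from symmetry together with Gershgorin, because the entries are nonnegative given $w_{nn}^t\ge\omega_{\mathrm{self}}>0$. The triangle inequality then gives $\|\mathbf I_N-(\mathbf W^t)^T\|_2\le2$. Squaring, taking expectations and dividing by $N$ yields $\frac1N\mathbb E\|\mathbf R^t\|_{\mathrm F}^2\le4\varepsilon_t^2$.

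The only mildly delicate step is confirming that $\mathbf W^t$ has nonnegative entries, since this is needed for $\|\mathbf W^t\|_2\le1$. The off-diagonal weights are clearly nonnegative. For the diagonal, $\sum_b s_{nb}^t/\max\{s_n^t,s_b^t\}\le\sum_b s_{nb}^t/s_n^t=1$, so $w_{nn}^t\ge\omega_{\mathrm{self}}$, as the paper already states. A sharper bound using the spectrum in $[2\omega_{\mathrm{self}}-1,1]$ is possible but unnecessary for the constant $4$.
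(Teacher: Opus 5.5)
Your proposal is correct and follows the same argument as the paper. Double stochasticity of $\mathbf W^t$ gives $(\mathbf I_N-(\mathbf W^t)^T)\mathbf 1=\mathbf 0$; right-multiplying the perturbed recursion by $\mathbf 1/N$ gives the average recursion; and $\|\mathbf I_N-(\mathbf W^t)^T\|_2\le 2$ yields the factor $4$. Your extra checks only make explicit what the paper asserts in one line: nonnegativity of $\mathbf W^t$ via $w_{nn}^t\ge\omega_{\mathrm{self}}$, the justification of $\|\mathbf W^t\|_2\le 1$, and the mixed bound $\|\mathbf A\mathbf B\|_{\mathrm F}\le\|\mathbf A\|_{\mathrm F}\|\mathbf B\|_2$.
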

\begin{IEEEproof}
See Appendix~\ref{app:proof_lem2}.
\end{IEEEproof}
\vspace{-3mm}
\subsection{Conditional Convergence of the Shared Critic Recursion}
\label{subsec:convergence_analysis}

Convergence analysis considers a fixed joint policy $\boldsymbol\pi$ whose induced trajectory process admits a stationary distribution. During the analyzed optimization window, the personalized critic heads and stop-gradient regression targets are held fixed. For BS $n$, define
\vspace{-4mm}
\begin{equation}
F_n(\psi)
=
\mathbb E_{\mathcal D_n\sim d_n^{\boldsymbol\pi}}
\left[
\mathcal L_{n,\mathrm{fr}}^{\mathrm c}
(\psi;\mathcal D_n)
\right],
\quad
F(\psi)
=
\frac1N
\sum_{n=1}^{N}
F_n(\psi),
\label{eq:expected_local_critic_loss}
\end{equation}
where
$
\mathcal L_{n,\mathrm{fr}}^{\mathrm c}
$
denotes the frozen-head, frozen-target loss. Let
\vspace{-3mm}
\begin{equation}
\bar\psi^t
=
\frac1N
\sum_{n=1}^{N}
\psi_n^t,
\qquad
\mathcal E_\psi^t
=
\frac1N
\sum_{n=1}^{N}
\|\psi_n^t-\bar\psi^t\|_2^2.
\label{eq:critic_average_and_consensus}
\end{equation}
\vspace{-1mm}
\noindent\textbf{Assumption 1.}
\begin{enumerate}[label=(A\arabic*)]
\item The matrices $\mathbf W^t$ are symmetric and doubly stochastic, and satisfy the following uniform mixing condition: there exists $\lambda_{\mathrm W}\in[0,1)$ such that
\vspace{-2mm}
\begin{equation}
\left\|
\mathbf W^t
-
\frac1N\mathbf1\mathbf1^T
\right\|_2
\leq
\lambda_{\mathrm W},
\qquad
\forall t.
\label{eq:mixing_assumption}
\vspace{-2mm}
\end{equation}

\item Each $F_n$ is lower bounded and $L$-smooth, i.e.,
\vspace{-2mm}
\begin{equation}
\|\nabla F_n(\psi)-\nabla F_n(\psi')\|_2
\leq
L\|\psi-\psi'\|_2.
\label{eq:smoothness_assumption}
\end{equation}

\item Let $\mathcal F_t$ contain the iterates and all randomness available before the round-$t$ trajectory batches are sampled. The directions satisfy
\vspace{-3mm}
\begin{equation}
\mathbb E[g_n^t\mid\mathcal F_t]
=
\nabla F_n(\psi_n^t).
\label{eq:unbiased_gradient_assumption}
\end{equation}
For finite constants $\sigma_g^2$ and $\sigma_{\mathrm{av}}^2$,
\vspace{-3mm}
\begin{align}
\frac1N
\sum_{n=1}^{N}
\mathbb E\!\left[
\left\|
g_n^t-\nabla F_n(\psi_n^t)
\right\|_2^2
\middle|
\mathcal F_t
\right]
&\leq
\sigma_g^2,
\label{eq:local_noise_variance}\\
\mathbb E\!\left[
\left\|
\frac1N
\sum_{n=1}^{N}
\left(
g_n^t-\nabla F_n(\psi_n^t)
\right)
\right\|_2^2
\middle|
\mathcal F_t
\right]
&\leq
\sigma_{\mathrm{av}}^2.
\label{eq:average_noise_variance}
\end{align}
This formulation permits cross-BS gradient-noise correlation. Under conditional independence,
$
\sigma_{\mathrm{av}}^2\leq\sigma_g^2/N.
$

\item The local-objective heterogeneity is bounded as
\vspace{-2mm}
\begin{equation}
\frac1N
\sum_{n=1}^{N}
\left\|
\nabla F_n(\psi)-\nabla F(\psi)
\right\|_2^2
\leq
\zeta^2,
\qquad
\forall\psi.
\label{eq:heterogeneity_assumption}
\vspace{-2mm}
\end{equation}

\item The shared critic parameters use the common initialization
\eqref{eq:common_critic_initialization}, and
$
0<\eta_{\mathrm c}\leq\eta_{\max},
$
where $\eta_{\max}$ is sufficiently small relative to $L$ and the uniform spectral gap $1-\lambda_{\mathrm W}$.
\end{enumerate}

\begin{thm}
\label{thm1}
Under Assumption~1, there exist constants
$
C_1,\ldots,C_6>0
$
independent of $T$ such that
\vspace{-2mm}
{\small\begin{align}
\frac1T
\sum_{t=0}^{T-1}
\mathbb E
\left[
\|\nabla F(\bar\psi^t)\|_2^2
\right]
\leq{}&
\frac{
C_1\big(F(\bar\psi^0)-F^\star\big)
}{
\eta_{\mathrm c}T
}
+
C_2\eta_{\mathrm c}\sigma_{\mathrm{av}}^2
\nonumber\\
&+
C_3
\frac{
\eta_{\mathrm c}^2
(\sigma_g^2+\zeta^2)
}{
(1-\lambda_{\mathrm W})^2
}
+
\frac{C_4}{(1-\lambda_{\mathrm W})^2}
\frac1T
\sum_{t=0}^{T-1}
\varepsilon_t^2,
\label{eq:critic_stationarity_bound}
\vspace{-2mm}
\end{align}}
where $F^\star$ is a lower bound of $F$. Moreover,
\vspace{-2mm}
\begin{equation}
\frac1T
\sum_{t=0}^{T-1}
\mathbb E[\mathcal E_\psi^t]
\leq
C_5
\frac{
\eta_{\mathrm c}^2
(\sigma_g^2+\zeta^2)
}{
(1-\lambda_{\mathrm W})^2
}
+
\frac{C_6}{(1-\lambda_{\mathrm W})^2}
\frac1T
\sum_{t=0}^{T-1}
\varepsilon_t^2.
\label{eq:critic_consensus_bound}
\vspace{-2mm}
\end{equation}
\end{thm}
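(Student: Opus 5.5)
The proof combines two recursions: a descent inequality for the network average $\bar\psi^t$, and a contraction recursion for the consensus error $\mathcal E_\psi^t$. Both rest on Lemma~\ref{lem:average_preservation}.

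\textbf{Descent step.} By \eqref{eq:average_critic_recursion}, the average follows the exact recursion $\bar\psi^{t+1}=\bar\psi^t-\eta_{\mathrm c}\bar g^t$, with no communication perturbation, because $\mathbf R^t\mathbf 1=\mathbf 0$. $L$-smoothness of $F$ (inherited from (A2)) gives
\begin{equation*}
F(\bar\psi^{t+1})\le F(\bar\psi^t)-\eta_{\mathrm c}\langle\nabla F(\bar\psi^t),\bar g^t\rangle+\tfrac{L\eta_{\mathrm c}^2}{2}\|\bar g^t\|_2^2 .
\end{equation*}
We then condition on $\mathcal F_t$ and apply \eqref{eq:unbiased_gradient_assumption}, which replaces $\bar g^t$ by $\frac1N\sum_n\nabla F_n(\psi_n^t)$. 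The variance term is bounded by $\sigma_{\mathrm{av}}^2$ using \eqref{eq:average_noise_variance}. Next we use $\langle a,b\rangle=\frac12(\|a\|^2+\|b\|^2-\|a-b\|^2)$ together with
\begin{equation*}
\Big\|\tfrac1N\textstyle\sum_n\nabla F_n(\psi_n^t)-\nabla F(\bar\psi^t)\Big\|^2\le L^2\mathcal E_\psi^t .
\end{equation*}
For $\eta_{\mathrm c}\le 1/L$ this yields
\begin{equation*}
\mathbb E F(\bar\psi^{t+1})\le \mathbb E F(\bar\psi^t)-\tfrac{\eta_{\mathrm c}}{2}\mathbb E\|\nabla F(\bar\psi^t)\|^2+\tfrac{\eta_{\mathrm c}L^2}{2}\mathbb E\mathcal E_\psi^t+\tfrac{L\eta_{\mathrm c}^2}{2}\sigma_{\mathrm{av}}^2 .
\end{equation*}
Telescoping over $t$ and using $F\ge F^\star$ produces the first two terms of \eqref{eq:critic_stationarity_bound}, plus an additive $L^2\frac1T\sum_t\mathbb E\mathcal E_\psi^t$. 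The stationarity bound therefore follows once the consensus bound \eqref{eq:critic_consensus_bound} is established.

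\textbf{Consensus step.} Let $\mathbf J=\frac1N\mathbf 1\mathbf 1^T$. Right-multiplying \eqref{eq:perturbed_critic_recursion} by $\mathbf I-\mathbf J$ and using double stochasticity gives
\begin{equation*}
\boldsymbol\Psi^{t+1}(\mathbf I-\mathbf J)=(\boldsymbol\Psi^t-\eta_{\mathrm c}\mathbf G^t)(\mathbf I-\mathbf J)\big((\mathbf W^t)^T-\mathbf J\big)+\mathbf R^t(\mathbf I-\mathbf J).
\end{equation*}
The first term has norm at most $\lambda_{\mathrm W}\|(\boldsymbol\Psi^t-\eta_{\mathrm c}\mathbf G^t)(\mathbf I-\mathbf J)\|_{\mathrm F}$ by (A1). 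We apply Young's inequality $\|a+b\|^2\le(1+c)\|a\|^2+(1+c^{-1})\|b\|^2$ twice, once to split off $\mathbf R^t$ and once to split off $\eta_{\mathrm c}\mathbf G^t$, choosing $c$ proportional to $1-\lambda_{\mathrm W}$. This gives
\begin{equation*}
\mathbb E\mathcal E_\psi^{t+1}\le \tfrac{1+\lambda_{\mathrm W}}{2}\,\mathbb E\mathcal E_\psi^t+\tfrac{c'\eta_{\mathrm c}^2}{1-\lambda_{\mathrm W}}\tfrac1N\mathbb E\|\mathbf G^t(\mathbf I-\mathbf J)\|_{\mathrm F}^2+\tfrac{c''}{1-\lambda_{\mathrm W}}\,4\varepsilon_t^2 ,
\end{equation*}
where the last term comes from \eqref{eq:perturbation_tracking_bound}. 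For the gradient term, we decompose $g_n^t$ into four pieces: noise, $\nabla F_n(\psi_n^t)-\nabla F_n(\bar\psi^t)$, $\nabla F_n(\bar\psi^t)-\nabla F(\bar\psi^t)$, and a common part that $(\mathbf I-\mathbf J)$ annihilates. This bounds the term by $3(\sigma_g^2+L^2\mathcal E_\psi^t+\zeta^2)$, using \eqref{eq:local_noise_variance} and \eqref{eq:heterogeneity_assumption}.

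\textbf{Main obstacle.} The delicate point is the self-coupling $\eta_{\mathrm c}^2L^2\mathcal E_\psi^t/(1-\lambda_{\mathrm W})$. We absorb it into the contraction by requiring $\eta_{\mathrm c}^2L^2\le c(1-\lambda_{\mathrm W})^2$, which is exactly the role of (A5). The net contraction factor is then $1-\frac{1-\lambda_{\mathrm W}}{4}$. Unrolling from $\mathcal E_\psi^0=0$ (common initialization) and summing the geometric series contributes one more factor $(1-\lambda_{\mathrm W})^{-1}$, so
\begin{equation*}
\tfrac1T\textstyle\sum_t\mathbb E\mathcal E_\psi^t\lesssim \dfrac{\eta_{\mathrm c}^2(\sigma_g^2+\zeta^2)+\frac1T\sum_t\varepsilon_t^2}{(1-\lambda_{\mathrm W})^2},
\end{equation*}
which is \eqref{eq:critic_consensus_bound}. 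This argument assumes the noise cross-terms vanish on conditioning. That holds for the noise part, and the deterministic parts are handled by Young's inequality, so nothing is lost.

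Finally, substituting \eqref{eq:critic_consensus_bound} into the telescoped descent inequality and dividing by $\eta_{\mathrm c}T/2$ yields \eqref{eq:critic_stationarity_bound}. The resulting constants are $C_1=2$, $C_2=L$, $C_3\propto L^2C_5$ and $C_4\propto L^2C_6$.
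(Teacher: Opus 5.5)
Your proposal is correct and follows essentially the same route as the paper's proof. In both, the projected disagreement recursion is contracted with $\|\mathbf W^t-\mathbf J\|_2\le\lambda_{\mathrm W}$, Young's inequality is applied with a parameter tied to $1-\lambda_{\mathrm W}$ (using \eqref{eq:perturbation_tracking_bound} for $\mathbf R^t$), the $\eta_{\mathrm c}^2L^2\mathcal E_\psi^t$ self-coupling is absorbed via (A5), the recursion is unrolled from $\mathcal E_\psi^0=0$, and the descent lemma is applied to the exact average recursion with bias bounded by $L^2\mathcal E_\psi^t$; you simply make the constants explicit (e.g., $C_1=2$, $C_2=L$) where the paper absorbs them into $c_0,\ldots,c_4$.
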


\begin{IEEEproof}
See Appendix~\ref{app:proof_thm1}.
\end{IEEEproof}

In this bound, the tracking-error term
$
T^{-1}\sum_{t=0}^{T-1}\varepsilon_t^2
$
plays the role of a communication-induced optimization-error term, consistent with wireless federated-learning analyses in which unreliable or resource-constrained communication appears explicitly in the convergence behavior~\cite{ChenTWC2024RobustFL}.

\begin{cor}
\label{cor:critic_rate}
Let $J_T$ be uniformly distributed over $\{0,\ldots,T-1\}$
and independent of the training randomness. If
\vspace{-2mm}
\begin{equation}
\eta_{\mathrm c}
=
\frac{c_\eta}{\sqrt T}
\vspace{-2mm}
\end{equation}
for sufficiently small $c_\eta>0$ and
\vspace{-2mm}
\begin{equation}
\frac1T
\sum_{t=0}^{T-1}
\varepsilon_t^2
=
\mathcal O\!\left(
\frac{\log T}{T}
\right),
\label{eq:tracking_rate_condition}
\vspace{-2mm}
\end{equation}
then
\vspace{-2mm}
\begin{equation}
\mathbb E
\left[
\|\nabla F(\bar\psi^{J_T})\|_2^2
\right]
=
\mathcal O(T^{-1/2})
+
\mathcal O\!\left(
\frac{\log T}{T}
\right).
\label{eq:critic_convergence_rate}
\end{equation}
For a fixed learning rate and
$
\limsup_{T\to\infty}
T^{-1}\sum_{t=0}^{T-1}\varepsilon_t^2
\leq
\bar\varepsilon^2,
$
the method converges to a stationarity neighborhood whose size is governed by
$
\eta_{\mathrm c}\sigma_{\mathrm{av}}^2
$,
$
\eta_{\mathrm c}^2(\sigma_g^2+\zeta^2)/(1-\lambda_{\mathrm W})^2
$,
and
$
\bar\varepsilon^2/(1-\lambda_{\mathrm W})^2.
$
\end{cor}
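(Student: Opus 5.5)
The corollary follows by specializing Theorem~\ref{thm1}. The only genuinely probabilistic step is converting the time-averaged bound into a bound at a randomized iterate. Since $J_T$ is uniform on $\{0,\ldots,T-1\}$ and independent of the training randomness, the tower property gives
\begin{equation*}
\mathbb E\big[\|\nabla F(\bar\psi^{J_T})\|_2^2\big]=\frac1T\sum_{t=0}^{T-1}\mathbb E\big[\|\nabla F(\bar\psi^t)\|_2^2\big].
\end{equation*}
So the left-hand side of \eqref{eq:critic_convergence_rate} is exactly the left-hand side of \eqref{eq:critic_stationarity_bound}, and it suffices to bound the four terms on the right of \eqref{eq:critic_stationarity_bound} under the prescribed step size.

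Next I check admissibility and substitute. Assumption~(A5) requires $\eta_{\mathrm c}\le\eta_{\max}$. With $\eta_{\mathrm c}=c_\eta/\sqrt T$ this holds for every $T\ge1$ once $c_\eta\le\eta_{\max}$, which is what "sufficiently small $c_\eta$" will mean. The constants $C_1,\ldots,C_4$ are independent of $T$, and $F(\bar\psi^0)-F^\star$ is a fixed quantity because of the common initialization \eqref{eq:common_critic_initialization}. The four terms then behave as follows:
\begin{itemize}
\item the optimization term becomes $C_1(F(\bar\psi^0)-F^\star)/(c_\eta\sqrt T)=\mathcal O(T^{-1/2})$;
\item the averaged-noise term becomes $C_2c_\eta\sigma_{\mathrm{av}}^2/\sqrt T=\mathcal O(T^{-1/2})$;
\item the consensus-drift term becomes $C_3c_\eta^2(\sigma_g^2+\zeta^2)/((1-\lambda_{\mathrm W})^2T)=\mathcal O(T^{-1})$, which is absorbed into $\mathcal O(\log T/T)$;
\item the tracking term is $\mathcal O(\log T/T)$ directly by hypothesis \eqref{eq:tracking_rate_condition}, multiplied by the $T$-independent factor $C_4/(1-\lambda_{\mathrm W})^2$.
\end{itemize}
Summing the four terms gives \eqref{eq:critic_convergence_rate}.

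For the fixed-learning-rate statement, I keep $\eta_{\mathrm c}\le\eta_{\max}$ constant and take $\limsup_{T\to\infty}$ on both sides of \eqref{eq:critic_stationarity_bound}. The term $C_1(F(\bar\psi^0)-F^\star)/(\eta_{\mathrm c}T)$ vanishes, and the tracking average is bounded in the limit by $\bar\varepsilon^2$. This yields
\begin{equation*}
\limsup_{T\to\infty}\frac1T\sum_{t=0}^{T-1}\mathbb E\|\nabla F(\bar\psi^t)\|_2^2\le C_2\eta_{\mathrm c}\sigma_{\mathrm{av}}^2+\frac{C_3\eta_{\mathrm c}^2(\sigma_g^2+\zeta^2)}{(1-\lambda_{\mathrm W})^2}+\frac{C_4\bar\varepsilon^2}{(1-\lambda_{\mathrm W})^2}.
\end{equation*}
This is the claimed stationarity neighborhood. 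The same argument applied to \eqref{eq:critic_consensus_bound} gives the matching consensus neighborhood.

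The main obstacle is a subtlety in the first step rather than any calculation. The constants $C_i$ from Theorem~\ref{thm1} must genuinely not depend on $\eta_{\mathrm c}$ or $T$, only on $L$, $\lambda_{\mathrm W}$ and $\eta_{\max}$, so that the $T$-dependent step size is legitimate. I would confirm this by noting that the theorem's constants arise from the descent lemma and the consensus contraction with $\eta_{\mathrm c}\le\eta_{\max}$, and are therefore uniform over admissible step sizes.

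Whether the tracking hypothesis \eqref{eq:tracking_rate_condition} itself holds is a separate matter that the corollary does not need to settle. By Lemma~\ref{lem:tracking_error} it would hold, for instance, for a threshold decaying like $\tau_{\mathrm{th}}^t\sim t^{-1/2}$ with bounded $\|\widehat\psi_n^t\|_2$, combined with control of the compressed-branch term $(1-\delta_c)\|\mathbf d_n^t\|_2^2$.
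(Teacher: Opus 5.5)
Your proposal is correct and matches the paper's proof step for step. You use the randomized-iterate identity from the uniformity and independence of $J_T$, and substitute $\eta_{\mathrm c}=c_\eta/\sqrt T$ into the four terms of \eqref{eq:critic_stationarity_bound} to get $\mathcal O(T^{-1/2})$, $\mathcal O(T^{-1/2})$, $\mathcal O(T^{-1})$ and $\mathcal O(\log T/T)$. For the fixed-step neighborhood you take a $\limsup$. Your explicit checks that $c_\eta\le\eta_{\max}$ and that $C_1,\ldots,C_4$ are uniform over admissible step sizes, not merely independent of $T$, make precise a point the paper leaves implicit.
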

\begin{IEEEproof}
See Appendix~\ref{app:proof_cor1}.
\end{IEEEproof}

\begin{rema}
Lemma~\ref{lem:tracking_error} gives the exact one-step effect of event triggering and top-$k$ compression on the public-reference error. Corollary~\ref{cor:critic_rate} deliberately states the additional decay condition on the accumulated tracking error rather than assuming that a particular threshold schedule automatically guarantees it. Establishing \eqref{eq:tracking_rate_condition} requires corresponding control of the shared-critic drift and communication schedule.
\end{rema}

\vspace{-3mm}
\begin{rema}
Theorem~\ref{thm1} concerns first-order stationarity of the fixed-policy, frozen-head, frozen-target aggregate critic objective. It does not establish convergence or global optimality of the evolving PPO actors, the complete actor--critic process, the Dec--POMDP, or the original mixed-integer wireless resource-allocation problem.
\end{rema}
\vspace{-4mm}
\subsection{Communication Overhead}

Let
$
k_n^t
=
\sum_{\ell=1}^{L_c}
k_{n,\ell}^t.
$
If each transmitted value uses $b_v$ bits and a coordinate index uses
$
b_i=\lceil\log_2 d_c\rceil
$
bits, the directed logical critic payload at round $t$ is
\vspace{-4mm}
\begin{equation}
\mathcal B_{\mathrm{comm}}(t)
=
\sum_{n=1}^{N}
|\mathcal B_n|
\xi_n^t
k_n^t
(b_v+b_i)
\quad\text{bits}.
\label{eq:comm_bits_final}
\end{equation}
If one physical broadcast is received by all neighbors, the factor
$
|\mathcal B_n|
$
is omitted for that transmitting BS. Equation~\eqref{eq:comm_bits_final} excludes protocol headers and the scalar relevance summaries required to construct \eqref{eq:symmetric_interference_score}. Because the current balanced weights require both $\bar\kappa_{nb}^t$ and $\bar\kappa_{bn}^t$, these low-rate summaries are exchanged independently of whether the critic trigger fires, or their most recently received values must be used. 
\vspace{-3mm}

\subsection{Training Procedure}

Algorithm~\ref{alg:fedcritic_mappo} summarizes the training procedure. Critic collaboration affects training only; decentralized execution uses the local actor at each BS.

\begin{algorithm}[!t]
\caption{Communication-Efficient FedCritic-MIMO}
\label{alg:fedcritic_mappo}
\DontPrintSemicolon
\footnotesize
\KwIn{Graph $\mathcal G_{\mathrm c}$; local actors $\{\theta_n^0\}$; common shared-critic initialization $\psi^0$; local heads $\{\omega_n^0\}$; trigger, compression, and fusion parameters.}
Initialize
$
\widehat\psi_n^0=\widetilde\psi_n^0=\psi_n^0=\psi^0
$
and
$
\mathbf e_n^0=\mathbf0
$
for all $n$\;
\For{training round $t=0,1,\ldots$}{
    \ForEach{BS $n\in\mathcal N$ in parallel}{
        Collect the local rollout $\mathcal D_n^t$\;
        {Compute TD residuals, GAE advantages, and frozen return targets using
        \eqref{eq:td_residual_final}--\eqref{eq:lambda_return_final}}\;
        Perform local critic optimization to obtain
        $(\widetilde\psi_n^{t+1},\widetilde\omega_n^{t+1})$\;
        Update the local actor using \eqref{eq:actor_loss_final}\;
        {Compute
        $
        \bar Q_n^t
        $,
        $
        \bar I_n^t
        $,
        and
        $
        \{\bar\kappa_{nb}^t\}_{b\in\mathcal B_n}
        $ and exchange the scalar directional-relevance summaries with neighboring BSs}\;
        Evaluate the trigger, form
        $
        \boldsymbol\Delta_n^t
        $,
        and update
        $
        \widehat\psi_n^{t+1}
        $
        and
        $
        \mathbf e_n^{t+1}
        $\;
        \If{$\xi_n^t=1$}{
            {Transmit the nonzero critic coordinates and their indices to $\mathcal B_n$}\;
        }
    }
    \ForEach{BS $n\in\mathcal N$ in parallel}{
        Construct
        $
        \mathbf W^t
        $
        from
        \eqref{eq:symmetric_interference_score}--\eqref{eq:self_fusion_weight}\;
        {Update the shared critic parameters using
        \eqref{eq:critic_fusion_final}
        and retain}
        $
        \omega_n^{t+1}
        =
        \widetilde\omega_n^{t+1}
        $\;
    }
}
\end{algorithm}

\section{Simulation Results}
\vspace{-2mm}
\subsection{Simulation Setup}
\label{subsec:simulation_setup}

We evaluate FedCritic-MIMO in a strongly interference-coupled reuse-$1$
multi-cell massive-MIMO OFDMA downlink consistent with
Section~\ref{sec:system_model}. The network has $7$ BSs, $8$ UEs per BS,
$16$ subcarriers, $32$ BS antennas, and up to $3$ simultaneous spatial
streams per subcarrier. Consistent with the open and
disaggregated RAN setting, each BS is treated as an independently
deployable cell-level controller with local observations, local actor
execution, and private experience. FedCritic-MIMO does not use a central
trainer or parameter server; collaboration is limited to peer-to-peer
exchange of shared critic parameters. Large-scale gains follow the
lognormal model in Table~\ref{tab:simulation_parameters}, and small-scale
channels follow temporally correlated, spatially i.i.d. Rayleigh fading
with Gauss--Markov coefficient $\rho=0.55$. The radius-$2$ graph is used as the logical inter-controller critic-exchange graph for
critic exchange and fusion; SINR computation includes interference from
all co-channel BSs.

One environment step is one network-wide wireless slot in which all BSs
observe local states, select actions, receive rewards, and update channel
and virtual-queue states. Each policy update uses a rollout of $H=128$
joint steps; thus, $U$ updates correspond to $128U$ network-wide
interactions, or $32{,}000$ interactions for $U=250$. Interactions are
counted per wireless slot, not per BS.

All learning-based methods use the same local actor architecture, action
space, PPO hyperparameters, rollout budget, and teacher-guided warm start.
Behavior-cloned actors and pretrained critics are copied identically to all
compatible methods before method-specific training, so differences arise
only from information scope, critic parameterization, and critic
coordination. For FedCritic-MIMO, the actor and personalized
critic components remain local during training, while only the shared
critic component participates in peer-to-peer collaboration.

Each method is trained with six independent random seeds. During training,
validation on a fixed held-out channel set is used for checkpoint
selection; selected checkpoints are then evaluated without exploration or
gradient updates on unseen channel realizations. Unless otherwise stated,
learning curves show run-level means with $95\%$ confidence intervals, and
final bars and distributions are computed from the selected checkpoints on
the held-out evaluation set.

Reported communication overhead refers to training-side
critic-model traffic among distributed controllers. It does not represent
user-plane payload traffic, fronthaul traffic, or a specific standardized
Open RAN interface. The principal simulation and learning parameters are
summarized in Table~\ref{tab:simulation_parameters}.

\begin{table*}[t]
\centering
\caption{Main simulation and learning parameters.}
\label{tab:simulation_parameters}
\renewcommand{\arraystretch}{1}
\setlength{\tabcolsep}{4pt}
\footnotesize
\begin{tabular}{l l l l}
\toprule
\textbf{Parameter} & \textbf{Value} &
\textbf{Parameter} & \textbf{Value} \\
\midrule

\multicolumn{4}{c}{\textit{Network and channel}}\\
\midrule
Number of BSs, $N$ & $7$ &
UEs per BS, $|\mathcal M_n|$ & $8$\\

BS antennas, $L_n$ & $32$ &
Maximum spatial streams, $S_n$ & $3$\\

Subcarriers, $K$ & $16$ &
Frequency reuse & $1$\\

Per-BS power budget, $P_n$ & $1$ normalized unit &
Noise PSD, $N_0$ & $10^{-3}$\\

Temporal channel correlation, $\rho$ & $0.55$ &
Small-scale fading &
{Temporally correlated, spatially i.i.d.\ Rayleigh}\\

{Direct-link large-scale gain} &
{$\ln\beta^{\mathrm{dir}}\sim\mathcal N(-2.3,1.10^2)$} &
{Cross-link gain multiplier} &
{$3.0$}\\

Minimum-rate target, $R_{n,m}^{\min}$ &
{$1.9$ normalized units} &
Coordination graph &
{Radius-$2$ ring; all BSs contribute physical interference}\\

{Power-utilization levels, $\upsilon_n$} &
\{${0.2,0.4,0.6,0.8,1.0}$\} &
{Normalized RZF levels, $\bar\alpha$} &
\{${10^{-3},10^{-2},5{\times}10^{-2},10^{-1},5{\times}10^{-1}}$\}\\

\midrule
\multicolumn{4}{c}{\textit{Actor--critic training}}\\
\midrule
Discount factor, $\gamma$ & $0.99$ &
GAE parameter, $\lambda$ & $0.95$\\

Actor learning rate, $\eta_{\mathrm a}$ & $3\times10^{-4}$ &
Critic learning rate, $\eta_{\mathrm c}$ & $5\times10^{-4}$\\

PPO clipping, $\epsilon_\pi$ & $0.20$ &
Entropy coefficient, $\beta_H$ & $0.02$\\

Actor/critic epochs & $4/4$ &
Gradient-norm limit & $0.50$\\

{Rollout horizon/minibatch size} &
{$128/64$} &
{Maximum training updates} &
{$250$}\\

{Independent training runs} &
{$6$} &
{Evaluation} &
{$6$ validation and $30$ held-out channel seeds/run}\\

Queue normalization, $q_0$ & $10$ &
Reward scaling & $0.01$\\

\midrule
\multicolumn{4}{c}{\textit{Federated communication and fusion}}\\
\midrule
{Shared critic dimension, $d_c$} &
{$50,048$} &
Value/index precision & $32/16$ bits\\

{Overall compression budget, $\rho_c$} &
{$0.20$} &
{Per-layer ratio bounds} &
{$[0.10,0.25]$}\\

Initial/floor trigger thresholds &
{$\tau_{\mathrm{th},0}=0.020$, $\tau_{\mathrm{th},\min}=0.001$} &
Queue/interference coefficients &
$\alpha_Q=1.0$, $\alpha_I=1.5$\\

Local fusion weight, $\omega_{\mathrm{self}}$ & $0.55$ &
Exchanged parameters & Shared critic only\\
\bottomrule
\end{tabular}
\end{table*}

\vspace{-3mm}
\subsection{Baselines and Comparison Protocol}
\label{subsubsec:simulation_baselines}

We compare FedCritic-MIMO with representative heuristic, independent-learning, centralized-training, and communication-ablation baselines.

\begin{itemize}
\item \emph{Random}: feasible scheduling, stream activation, power, and RZF-regularization actions are selected randomly.

\item \emph{Greedy-MaxGain}: each BS schedules UEs with the largest instantaneous direct-channel gains and uses the maximum available power-fraction level.

\item \emph{Greedy-Queue}: each BS applies a queue-weighted channel-gain rule to account for long-term rate deficits without learning.

\item \emph{Greedy-IA-Queue}: the Greedy-Queue metric is further normalized by incoming and outgoing interference-risk terms, forming the strongest non-learning heuristic.

\item \emph{Strict-Independent-PPO}: each BS trains a purely local actor--critic using only own-cell channel, queue, occupancy, and measured-SINR information, without explicit neighbor features or outgoing-interference pricing.

\item \emph{No-Federation-IA-PPO}: each BS uses the same interference-aware observations, reward, actor, and critic architectures as FedCritic-MIMO, but without critic exchange. This isolates the benefit of serverless critic collaboration.

\item \emph{CTDE-MAPPO}: decentralized actors are trained with a centralized critic using concatenated BS-level critic features, while execution remains local.

\item \emph{Periodic-Full}: every BS exchanges its complete shared critic parameters with coordination neighbors at every training update.

\item \emph{Event-Uncompressed}: the proposed event rule determines communication times, but transmitted critic increments are uncompressed.

\item \emph{Proposed}: the complete FedCritic-MIMO framework with utility-aware event-triggered shared-critic exchange, adaptive layer-wise top-$k$ compression with error feedback, and balanced interference-aware serverless fusion.
\end{itemize}

\begin{figure}[!t] 
\centering
  \subfloat[]{
    \includegraphics[width=0.4\textwidth]{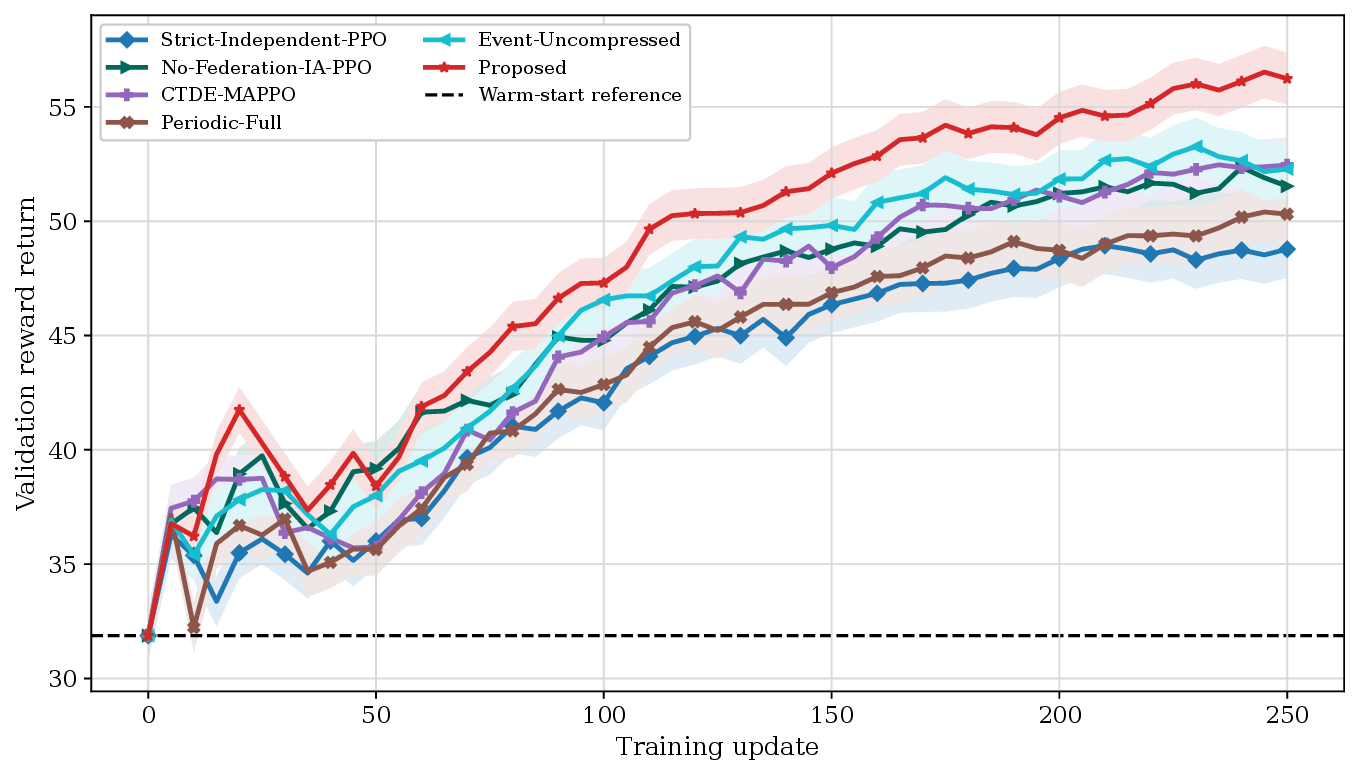}}
  \vspace*{-3mm}
  \hfill
  \subfloat[]{
    \includegraphics[width=0.4\textwidth]{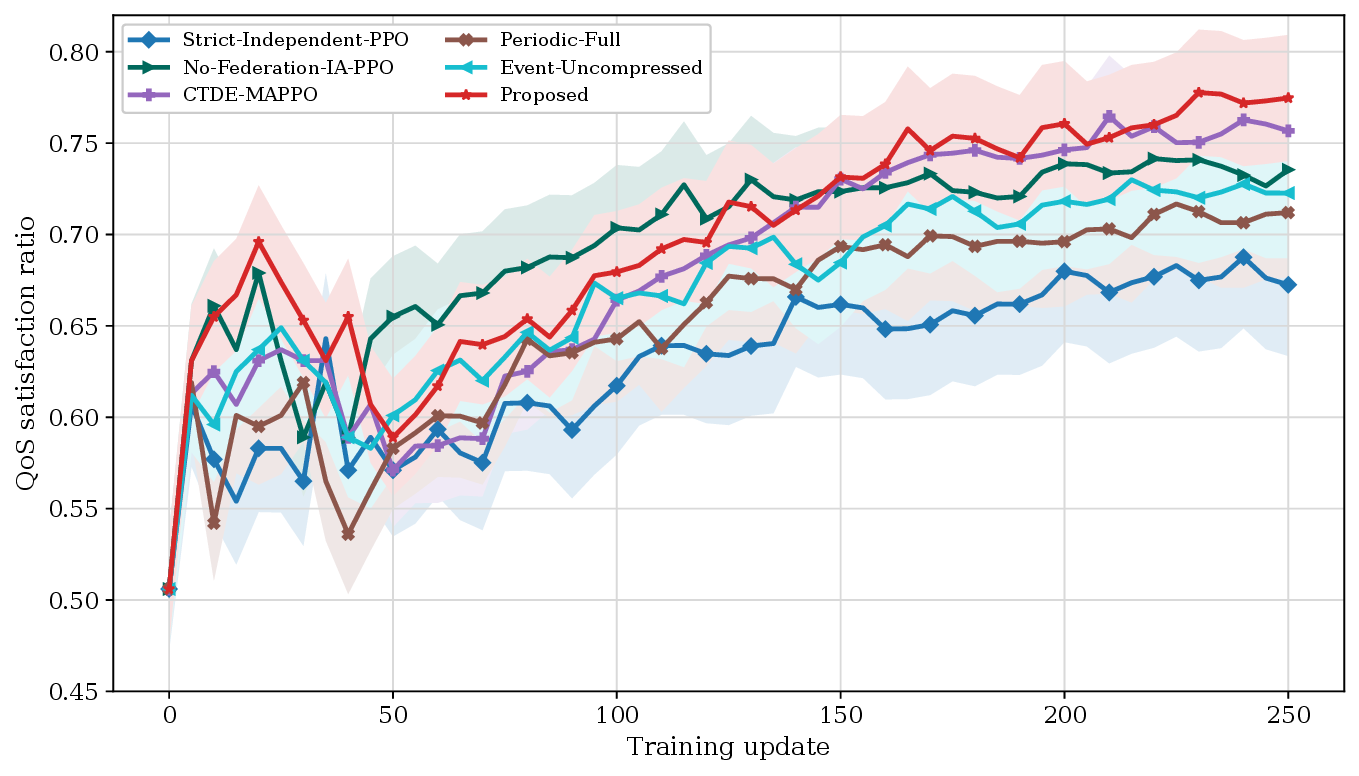}}
    \vspace*{-3mm}
  \hfill
  \subfloat[]{
    \includegraphics[width=0.4\textwidth]{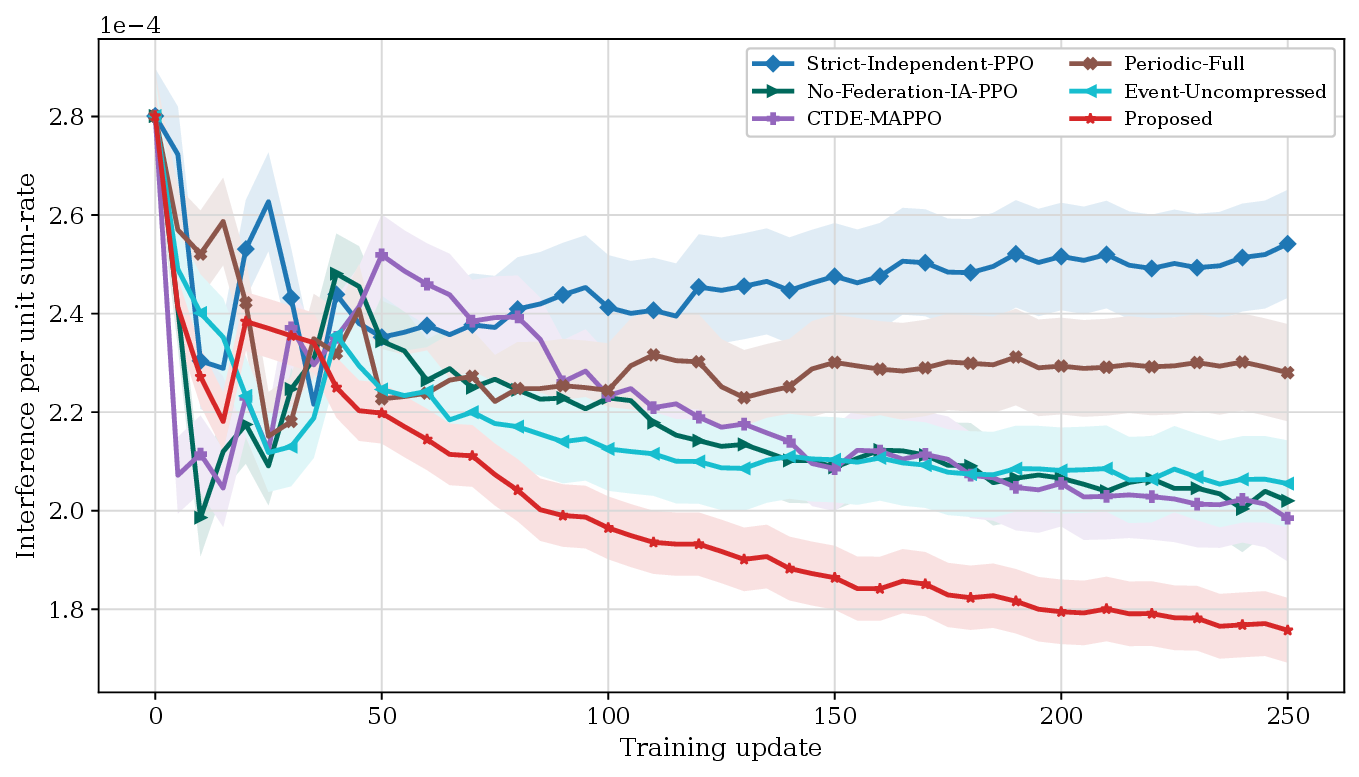}}
    \vspace*{-2mm}
\caption{Validation learning dynamics over training updates: (a) reward, (b) QoS satisfaction, and (c) interference cost per unit sum rate.}
  \vspace{-3mm}
\label{fig:learning_dynamics}
\end{figure}
\vspace{-4mm}
\subsection{Learning Dynamics}
\label{subsec:learning_dynamics}
Fig.~\ref{fig:learning_dynamics} reports the evolution of the validation reward,
the QoS-satisfaction metric, and the interference cost per unit sum rate. All
learning-based methods improve substantially beyond the common warm-start
reference, confirming that the subsequent policy updates provide material gains
over the teacher-guided initialization. As Fig.~\ref{fig:learning_dynamics} (a) shows, the proposed method separates from the
other methods after approximately the first one hundred updates and reaches a
validation reward of about $56.3$ at update $250$, compared with approximately
$52.5$, $52.0$, $51.5$, $50.3$, and $48.8$ for CTDE-MAPPO,
Event-Uncompressed, No-Federation-IA-PPO, Periodic-Full, and
Strict-Independent-PPO, respectively.

The QoS trajectories in Fig.~\ref{fig:learning_dynamics} (b), exhibit a similar, although less separated, ordering. At
the final displayed update, the proposed method reaches approximately $0.78$,
while CTDE-MAPPO, No-Federation-IA-PPO, Event-Uncompressed, Periodic-Full, and
Strict-Independent-PPO reach approximately $0.76$, $0.74$, $0.73$, $0.71$, and
$0.67$, respectively. The confidence bands of the proposed method and
CTDE-MAPPO overlap near the end of training; hence, the figure supports the
conclusion that the proposed decentralized method matches the strongest
centralized-training baseline in QoS, rather than establishing a statistically
significant QoS advantage from the learning curves alone.

As shown in Fig.~\ref{fig:learning_dynamics} (c), the proposed method also produces the most consistent reduction in the
interference-per-rate metric, for which lower values are preferable. It
decreases from approximately $2.8\times10^{-4}$ at initialization to about
$1.8\times10^{-4}$, whereas the strongest competing learning methods finish
between approximately $2.0\times10^{-4}$ and $2.1\times10^{-4}$. Thus, the
reward improvement is accompanied by more interference-efficient operation,
rather than being obtained solely by increasing the delivered rate without
controlling the resulting inter-cell interference. Because the proposed reward
curve remains mildly increasing at the last displayed update, these results
should be interpreted as finite-budget performance rather than evidence of
empirical convergence.
\vspace{-0.2mm}
\begin{figure}[!t]
    \centering
    \includegraphics[width=0.8\linewidth]{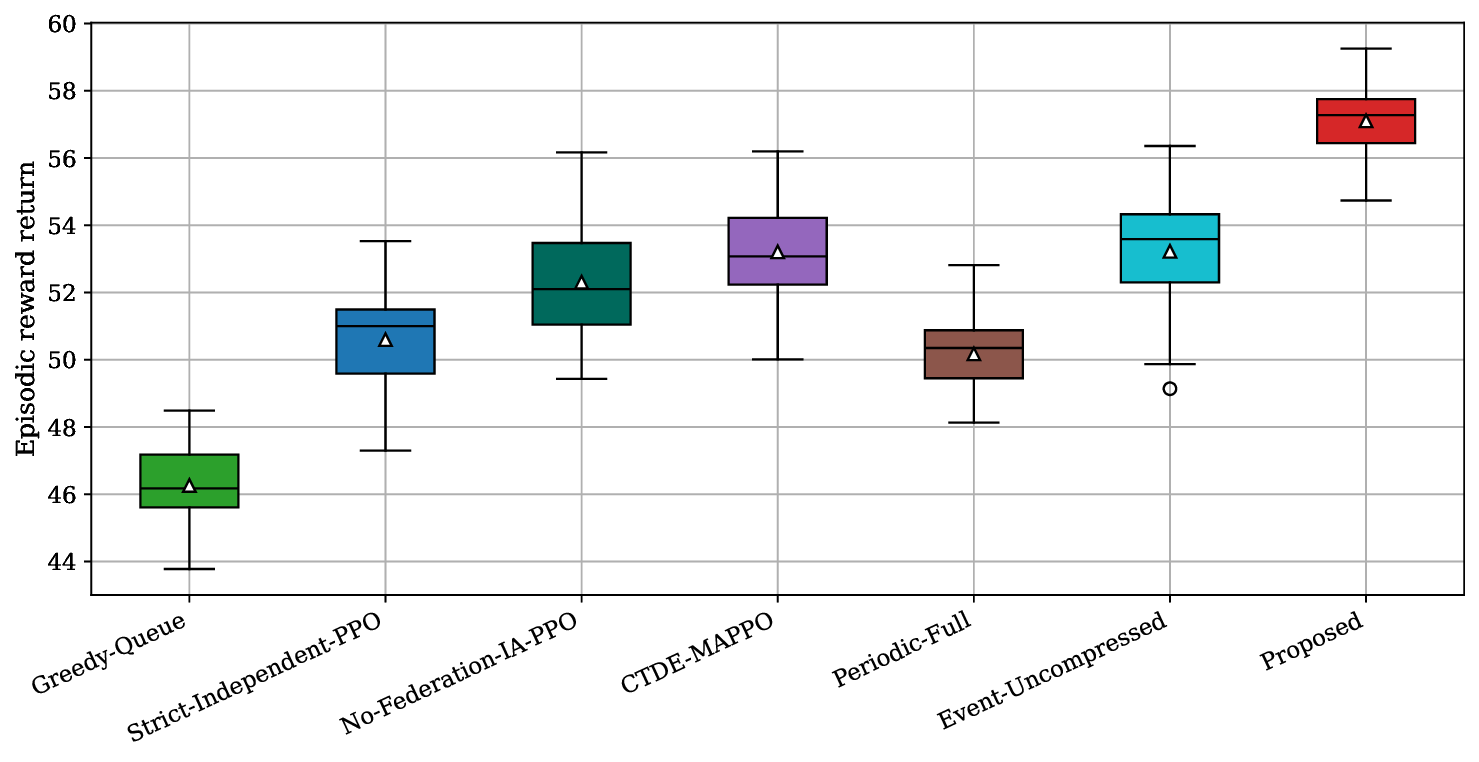}
    \caption{Held-out episodic reward distribution.}
    \vspace{-2mm}
    \label{fig:heldout_reward}
\end{figure}
\begin{figure}[!t]
    \centering
    \includegraphics[width=0.8\linewidth]{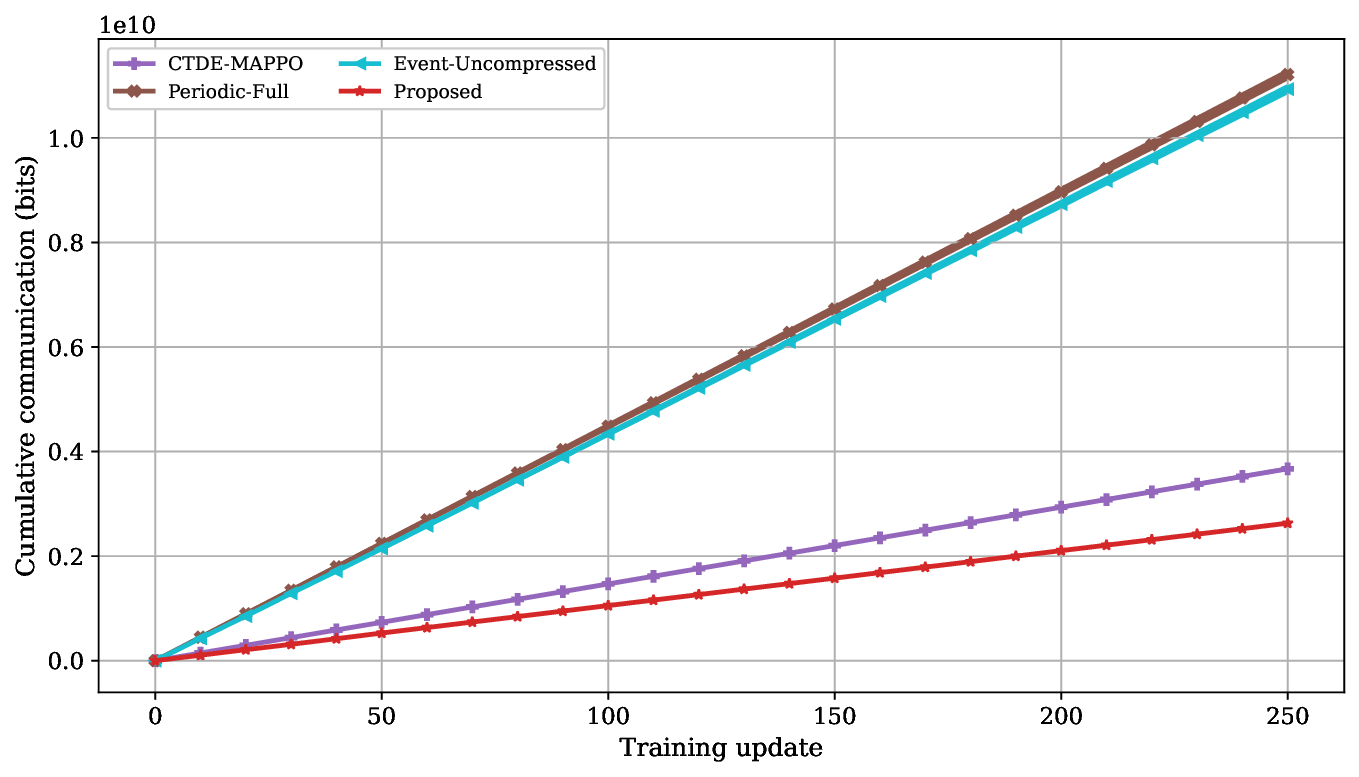}
\caption{Cumulative training-side communication overhead.}
    \vspace{-2mm}
    \label{fig:communication_overhead}
\end{figure}

\subsection{Held-Out Reward Performance}
\label{subsec:reward_results}

Fig.~\ref{fig:heldout_reward} evaluates the selected policies on channel
realizations not used for training or checkpoint selection. The proposed method
attains a mean episodic reward of approximately $57.0$. Event-Uncompressed and
CTDE-MAPPO achieve approximately $53.3$ and $53.1$, respectively, while
No-Federation-IA-PPO reaches approximately $52.2$. Strict-Independent-PPO and
Periodic-Full attain approximately $50.4$ and $50.3$, and Greedy-Queue reaches
approximately $46.4$. Accordingly, the proposed method improves the mean
held-out reward by about $6.8\%$ relative to Event-Uncompressed, $7.2\%$
relative to CTDE-MAPPO, and $9.1\%$ relative to
No-Federation-IA-PPO.

The proposed distribution is shifted to the right and has a comparatively
compact interquartile range. This indicates that the gain is present across the
held-out realizations rather than being generated by a small number of
exceptionally favorable episodes. Nevertheless, episode-level samples generated
by the same trained policy are not independent training replicates. Therefore,
claims of statistical significance should be based on paired comparisons across
the independent training runs, while the boxplot is used as a descriptive
representation of held-out variability.
\vspace{-3mm}
\subsection{Communication Efficiency}
\label{subsec:communication_results}

Fig.~\ref{fig:communication_overhead} reports cumulative training-side
communication under the adopted accounting rule. At update $250$,
Periodic-Full and Event-Uncompressed require approximately $11.2$ and
$11.0$~Gbits, respectively, whereas CTDE-MAPPO and the proposed method require
approximately $3.65$ and $2.65$~Gbits. The proposed method therefore reduces
the reported communication by approximately $76\%$ relative to both
uncompressed distributed critic-exchange baselines and by approximately $27\%$
relative to CTDE-MAPPO.

The small difference between Periodic-Full and Event-Uncompressed shows that
the selected event trigger remains active during most training rounds.
Consequently, the measured reduction relative to the uncompressed distributed
baselines is attributable primarily to layer-wise sparse critic exchange and
error feedback, rather than to a large decrease in the number of communication
rounds. The experiment therefore demonstrates the effectiveness of compressed
exchange at the selected performance-oriented operating point, but does not by
itself isolate the gain attributable to event triggering. Such isolation would
require a periodic-compressed baseline with the same sparsification budget.
Methods without critic coordination have zero federated-model communication,
but their lower reward, QoS, and interference efficiency show the performance
cost of eliminating collaboration entirely.

\subsection{Long-Term QoS and Mean SINR}
\label{subsec:qos_sinr_results}

Fig.~\ref{fig:QoS_ratio} compares the held-out QoS metric and mean SINR. Fig.~\ref{fig:QoS_ratio} (a) shows that the
proposed method obtains a QoS-satisfaction ratio of approximately $0.78$,
closely followed by CTDE-MAPPO at approximately $0.77$.
No-Federation-IA-PPO and Event-Uncompressed achieve approximately $0.74$ and
$0.73$, while Periodic-Full and Strict-Independent-PPO attain approximately
$0.70$ and $0.69$. The overlapping error bars of the proposed method and
CTDE-MAPPO again indicate comparable QoS performance; the more pronounced
benefit of the proposed method appears in the interference-related metrics.

In Fig.~\ref{fig:QoS_ratio} (b), the proposed method reaches a mean SINR of approximately
$-2.3$~dB, compared with approximately $-4.0$~dB for Event-Uncompressed,
$-4.8$~dB for CTDE-MAPPO, $-5.1$~dB for No-Federation-IA-PPO, $-5.6$~dB for
Periodic-Full, and $-5.9$~dB for Strict-Independent-PPO. Hence, the proposed
method improves the mean SINR by approximately $1.7$~dB relative to
Event-Uncompressed and by approximately $2.5$~dB relative to CTDE-MAPPO. The
negative absolute SINR values are consistent with the deliberately
interference-limited reuse-$1$ operating regime; the relevant observation is
the substantial relative improvement achieved through coordinated scheduling,
power control, and beamforming.

\vspace{-4mm}
\subsection{User-Rate Distribution}
\label{subsec:rate_distribution_results}

Fig.~\ref{fig:rate_cdf} presents the empirical CDF of the per-UE time-average
held-out rate. The proposed curve is shifted to the right of those of the
learning and heuristic baselines over most of the distribution. The separation
is particularly visible in the lower and median portions of the CDF, indicating
that the proposed method improves not only the average network outcome but also
the rates experienced by comparatively weak UEs. The curves approach one
another in the upper tail, suggesting that the principal benefit is improved
service to disadvantaged and typical users rather than an isolated increase in
the largest UE rates.

The vertical line at $R_{n,m}^{\min}=1.9$ identifies the long-term target. The
rate CDF and the reported QoS bar must be interpreted using explicitly distinct
definitions if the latter is computed as a time average of instantaneous
UE-slot satisfaction indicators. If the QoS bar is intended instead to denote
the fraction of UEs whose time-average rate satisfies the long-term constraint,
it must equal
$1-\widehat F_{\bar R}(R_{n,m}^{\min})$ and should be recomputed directly from
the same samples used for Fig.~\ref{fig:rate_cdf}. This distinction is required
to avoid conflating instantaneous service reliability with satisfaction of the
long-term average-rate constraint.

\begin{figure}[!t]
    \centering
  \subfloat[]{
    \includegraphics[width=0.8\linewidth]{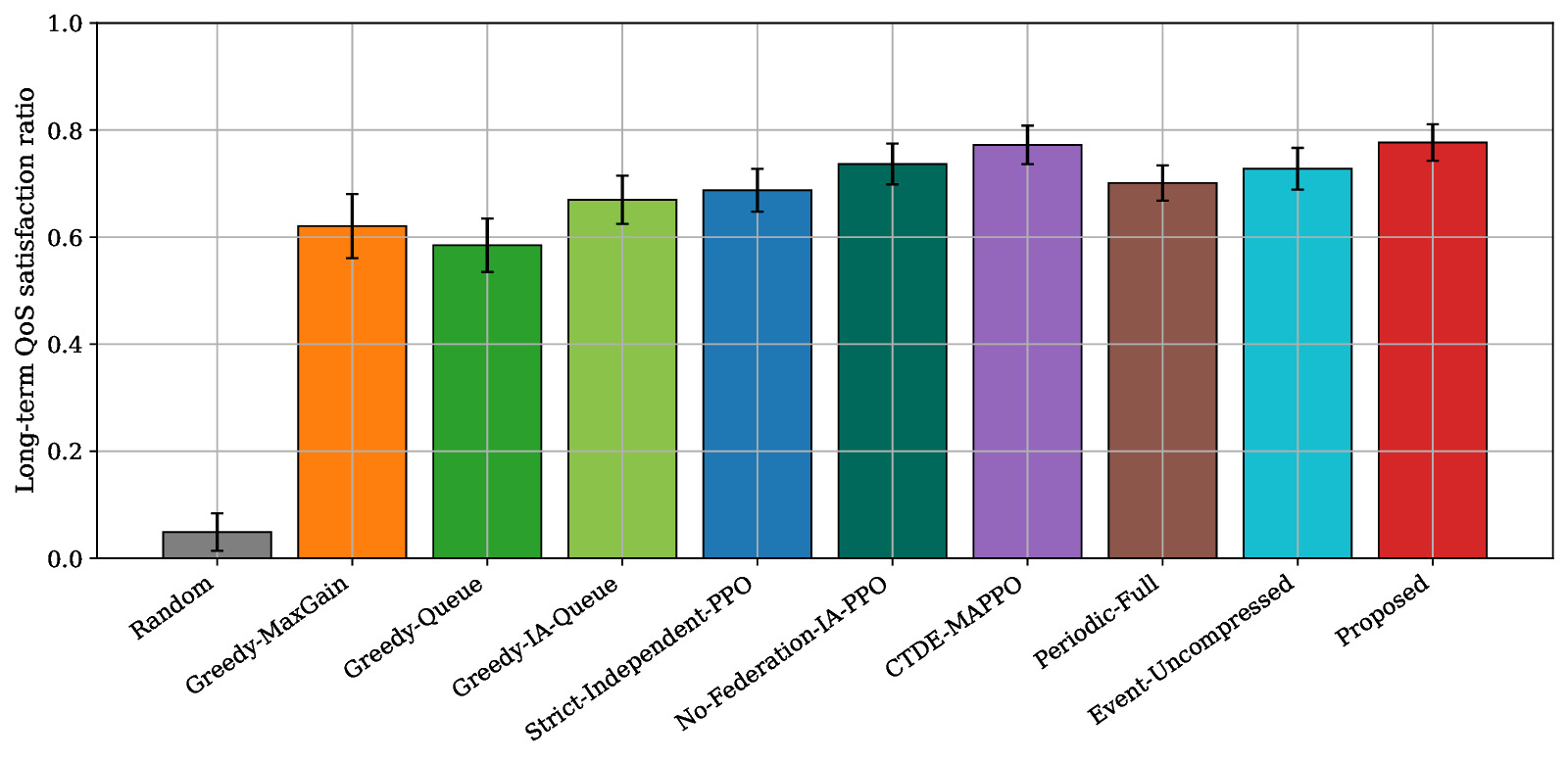}}
    \vspace{-2mm}
    \hfill
  \subfloat[]{\includegraphics[width=0.8\linewidth]{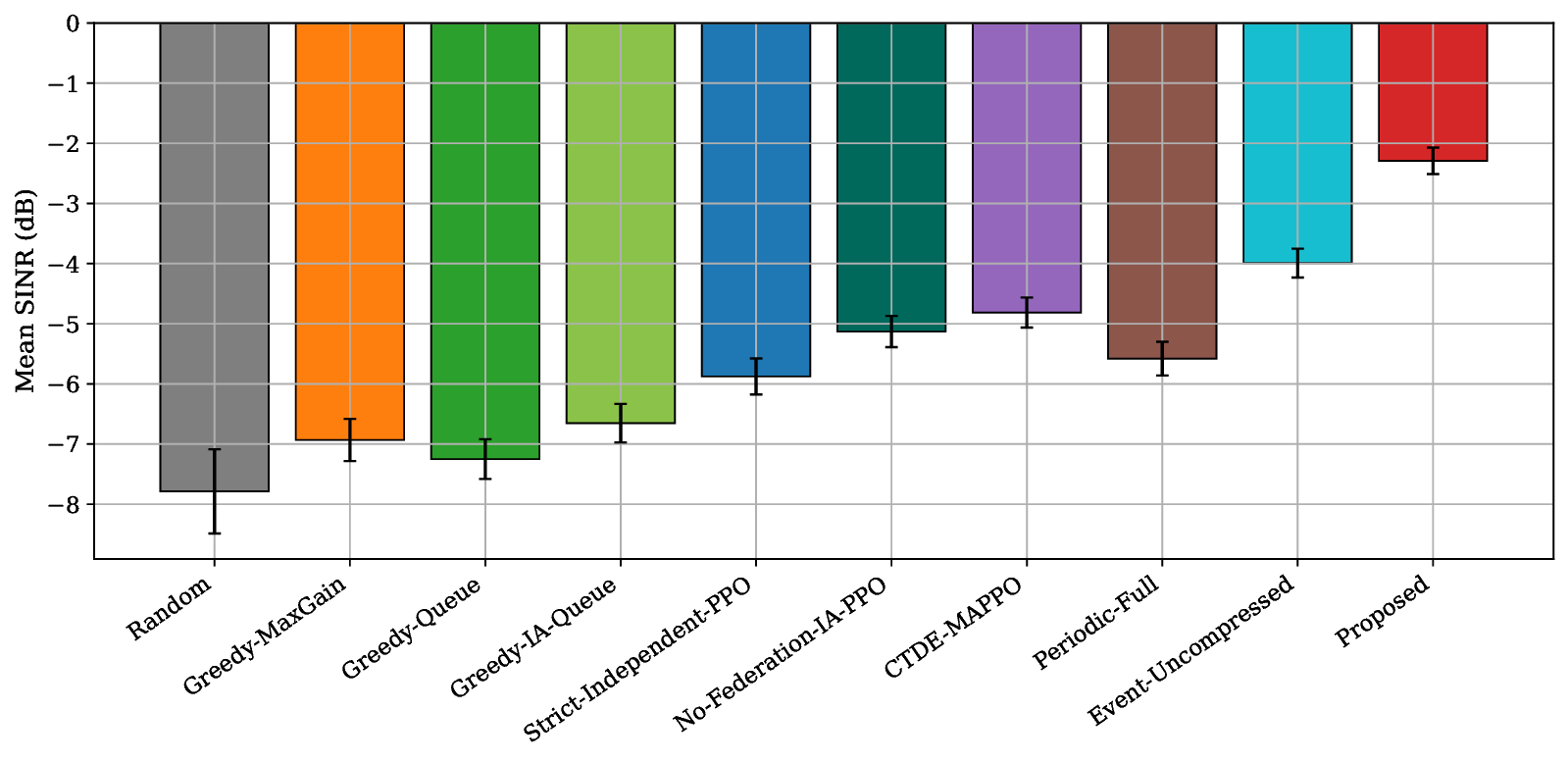}}
   \caption{Held-out QoS and SINR performance of the selected checkpoints: (a) long-term QoS-satisfaction ratio and (b) mean SINR.}
   \vspace{-3mm}
    \label{fig:QoS_ratio}
\end{figure}

\begin{figure}[!t]
\centering
    \includegraphics[width=0.85\linewidth]{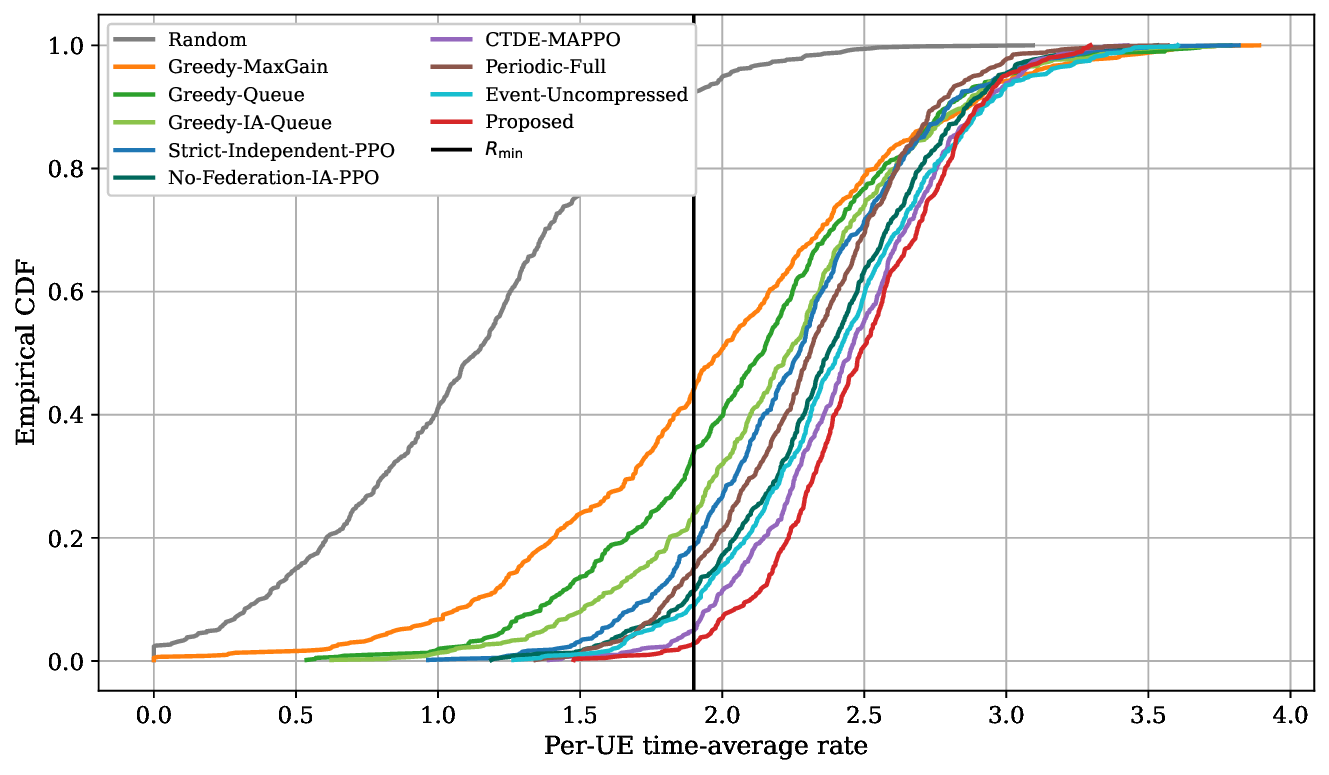}
    \caption{Empirical CDF of per-UE time-average held-out rates.}
    \vspace{-2mm}
    \label{fig:rate_cdf}
\end{figure}

\begin{figure}[!t]
    \centering
    \includegraphics[width=0.8\linewidth]{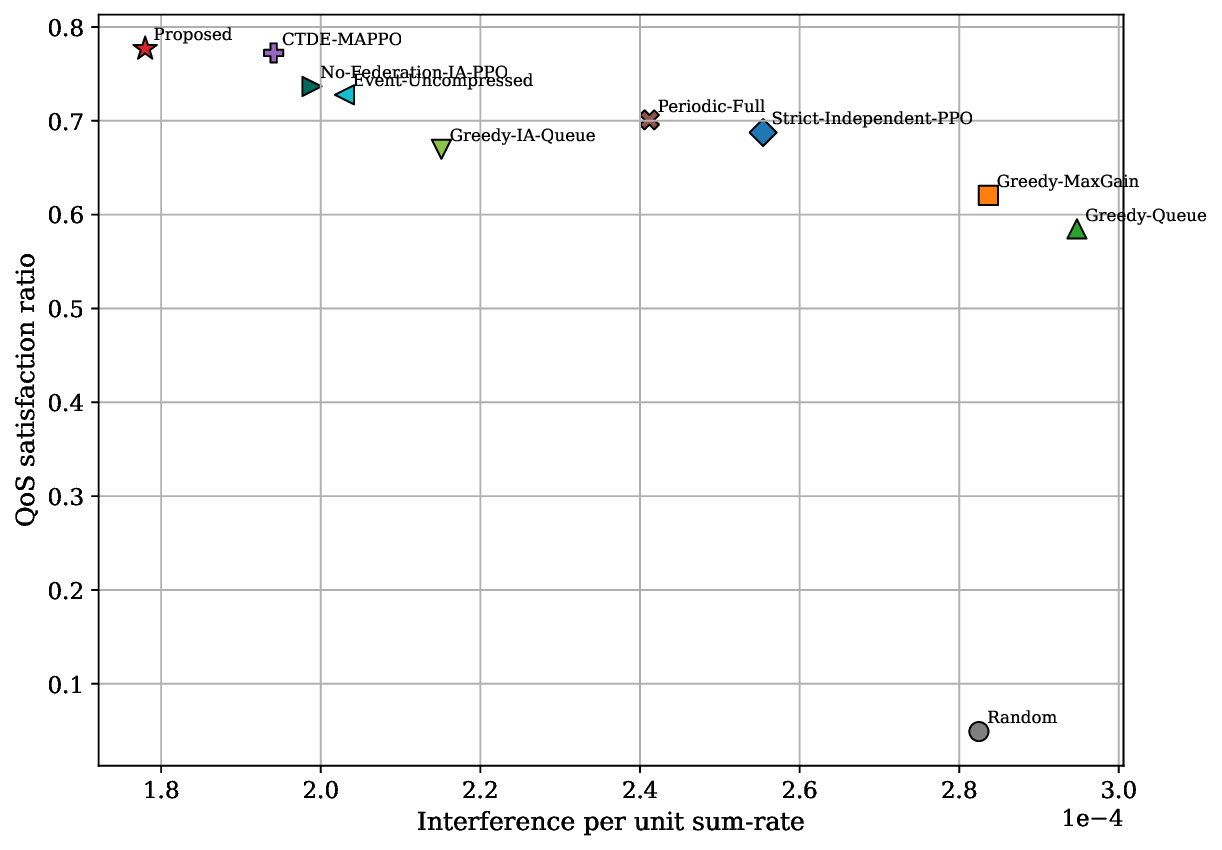}
\caption{QoS--interference operating points of the evaluated methods.}
\label{fig:qos_interference}
\end{figure}
\vspace{-4mm}
\subsection{Interference Management}
\label{subsec:interference_results}

The interference-per-rate results provide a normalized view of how effectively
each policy converts the interference it creates into useful throughput. The
proposed method attains approximately $1.78\times10^{-4}$, compared with
$1.93\times10^{-4}$ for CTDE-MAPPO, $1.98\times10^{-4}$ for
No-Federation-IA-PPO, $2.03\times10^{-4}$ for Event-Uncompressed, and
$2.40\times10^{-4}$ for Periodic-Full. These values correspond to reductions
of approximately $8\%$, $10\%$, $12\%$, and $26\%$, respectively.

Fig.~\ref{fig:qos_interference} summarizes the joint QoS--interference
operating points. The preferred region is the upper-left corner, representing
larger QoS satisfaction and smaller interference cost. The proposed method
occupies the most favorable displayed point. CTDE-MAPPO provides nearly the
same QoS ratio but at a visibly larger interference cost, whereas
No-Federation-IA-PPO and Event-Uncompressed sacrifice both QoS and
interference efficiency. The comparison with Strict-Independent-PPO and the
greedy baselines further shows that explicit interference awareness alone is
insufficient to attain the proposed operating point; collaborative critic
learning contributes additional value.

\vspace{-4mm}
\subsection{Overall Operating Point}
\label{subsec:overall_operating_point}

Taken together, the figures indicate that the proposed method provides the
strongest performance--communication operating point among the coordinated
learning methods considered. It achieves the largest validation and held-out
reward, the highest mean SINR, the smallest interference cost per unit sum
rate, and a QoS level comparable to the best centralized-training baseline.
Relative to uncompressed distributed critic exchange, these benefits are
obtained with an approximately $76\%$ reduction in reported training-side
communication. Relative to the non-federated learning baselines, the proposed
method uses additional training communication but obtains materially better
reward, user-rate, QoS, and interference outcomes. The result should therefore
be interpreted as a favorable joint operating point, not as a claim that the
proposed method minimizes communication in isolation.

\section{Conclusion}
\label{sec:conclusion}
This paper proposed FedCritic-MIMO, a communication-efficient serverless federated critic learning framework for massive-MIMO resource control in open and disaggregated 6G RANs. The framework enables independently deployable cell-level controllers to coordinate scheduling, power allocation, and beamforming without centralized trajectory collection, actor sharing, or parameter-server aggregation. By keeping local actors and personalized critic components private while exchanging only compatible shared critic parameters, FedCritic-MIMO preserves local control autonomy and supports collaboration over the physical interference graph. FedCritic-MIMO combines utility-aware triggering, adaptive layer-wise top-$k$ compression with error feedback, and balanced interference-aware fusion to make peer-to-peer critic collaboration both communication- efficient and radio-aware. We also established conditional finite-time stationarity and consensus guarantees for the balanced shared-critic recursion under a fixed-policy, frozen-target critic-regression model. Simulation results showed that FedCritic-MIMO achieves the most favorable performance--communication tradeoff among the considered baselines, improving throughput, user-rate distribution, SINR, QoS satisfaction, and interference efficiency while substantially reducing critic-model traffic. Future work will consider asynchronous peer coordination, mobility-aware interference graphs, imperfect CSI, and larger heterogeneous deployments. 


\vspace{-1mm}
\appendices
\renewcommand{\theequation}
{\thesection.\arabic{equation}}
\section{Proof of Lemma~\ref{lem:tracking_error}}
\label{app:proof_lem1}
\setcounter{equation}{0}

\begin{IEEEproof}
If $\xi_n^t=0$, then $\Gamma_n^t<\tau_{\mathrm{th}}$. Since
$1+\alpha_Q\bar Q_n^t+\alpha_I\bar I_n^t\geq1$ and
$\mathbf e_n^{t+1}=\mathbf d_n^t$, \eqref{eq:trigger_score} gives
$\|\mathbf e_n^{t+1}\|_2
<
\tau_{\mathrm{th}}\big(\|\widehat\psi_n^t\|_2+\epsilon_{\mathrm{tr}}\big)$.
If $\xi_n^t=1$, then
$\mathbf e_n^{t+1}
=\mathbf d_n^t-\mathcal C_n^t(\mathbf d_n^t)$, and
\eqref{eq:compressor_contraction} yields
$\|\mathbf e_n^{t+1}\|_2^2
\leq(1-\delta_c)\|\mathbf d_n^t\|_2^2$.
These two cases prove \eqref{eq:tracking_error_bound}.
\end{IEEEproof}
\vspace{-3mm}
\section{Proof of Lemma~\ref{lem:average_preservation}}
\label{app:proof_lem2}
\begin{IEEEproof}
Because $\mathbf W^t$ is doubly stochastic,
$(\mathbf I_N-(\mathbf W^t)^T)\mathbf1=\mathbf0$.
Equation~\eqref{eq:communication_perturbation} therefore gives
$\mathbf R^t\mathbf1=\mathbf0$. Right-multiplying
\eqref{eq:perturbed_critic_recursion} by $\mathbf1/N$ then gives
\eqref{eq:average_critic_recursion}. Finally,
$\|\mathbf I_N-(\mathbf W^t)^T\|_2\leq2$ for a symmetric stochastic
matrix; hence,
$\frac1N\mathbb E\|\mathbf R^t\|_{\mathrm F}^2
\leq
\frac4N\mathbb E
\|\widetilde{\boldsymbol\Psi}^{t+1}
-\widehat{\boldsymbol\Psi}^{t+1}\|_{\mathrm F}^2
=
4\varepsilon_t^2$,
which proves \eqref{eq:perturbation_tracking_bound}.
\end{IEEEproof}
\vspace{-5mm}
\section{Proof of Theorem~\ref{thm1}}
\label{app:proof_thm1}
\begin{IEEEproof}
Let
$\mathbf J\triangleq\frac1N\mathbf1\mathbf1^T$,
$\boldsymbol\Pi \triangleq\mathbf I_N-\mathbf J$, and
$\mathbf Z^t\triangleq\boldsymbol\Psi^t\boldsymbol\Pi$. Then
$N^{-1}\|\mathbf Z^t\|_{\mathrm F}^2=\mathcal E_\psi^t$.
With
$\mathbf A^t\triangleq\mathbf W^t-\mathbf J$,
double stochasticity gives
$\mathbf W^t\boldsymbol\Pi=\boldsymbol\Pi\mathbf W^t=\mathbf A^t$ and
$\|\mathbf A^t\|_2\leq\lambda_{\mathrm W}$. Using
$\mathbf R^t\boldsymbol\Pi=\mathbf R^t$, the disagreement recursion is
\vspace{-2mm}
\begin{equation}
\mathbf Z^{t+1}
=
\big(\mathbf Z^t-\eta_{\mathrm c}\mathbf G^t\boldsymbol\Pi\big)
(\mathbf A^t)^T+\mathbf R^t.
\label{eq:app_disagreement_recursion_compact}
\end{equation}

Let
$\mathbf U^t=[\nabla F_1(\psi_1^t),\ldots,\nabla F_N(\psi_N^t)]$.
By smoothness, bounded heterogeneity, and
\eqref{eq:local_noise_variance},
\vspace{-2mm}
\begin{equation}
\frac1N\mathbb E
\|\mathbf G^t\boldsymbol\Pi\|_{\mathrm F}^2
\leq
c_0\!\left(
L^2\mathbb E[\mathcal E_\psi^t]
+\sigma_g^2+\zeta^2
\right)
\label{eq:app_gradient_disagreement_compact}
\end{equation}
for a constant $c_0>0$. Applying Young's inequality to
\eqref{eq:app_disagreement_recursion_compact}, using
\eqref{eq:perturbation_tracking_bound}, and choosing
$\eta_{\mathrm c}\leq\eta_{\max}$ sufficiently small relative to
$L$ and $1-\lambda_{\mathrm W}$, yields constants $c_1,c_2>0$ and
$\rho_{\mathrm{dis}}\in(0,1)$ with $1-\rho_{\mathrm{dis}}\geq c_1(1-\lambda_{\mathrm W})$ such that
\vspace{-2mm}
\begin{equation}
\mathbb E[\mathcal E_\psi^{t+1}]
\leq
\rho_{\mathrm{dis}}\,\mathbb E[\mathcal E_\psi^t]
+
\frac{c_2\eta_{\mathrm c}^2}{1-\lambda_{\mathrm W}}
(\sigma_g^2+\zeta^2)
+
\frac{c_2}{1-\lambda_{\mathrm W}}\varepsilon_t^2.
\label{eq:app_consensus_recursion_compact}
\end{equation}
Since $\mathcal E_\psi^0=0$, summing
\eqref{eq:app_consensus_recursion_compact} and using
$\sum_{j\geq0}\rho_{\mathrm{dis}}^j=(1-\rho_{\mathrm{dis}})^{-1}$ gives
\vspace{-2mm}
\begin{equation}
\frac1T\sum_{t=0}^{T-1}\mathbb E[\mathcal E_\psi^t]
\leq
C_5
\frac{\eta_{\mathrm c}^2(\sigma_g^2+\zeta^2)}
{(1-\lambda_{\mathrm W})^2}
+
\frac{C_6}{(1-\lambda_{\mathrm W})^2}
\frac1T\sum_{t=0}^{T-1}\varepsilon_t^2,
\label{eq:app_consensus_final_compact}
\end{equation}
which proves \eqref{eq:critic_consensus_bound}.

It remains to bound the aggregate objective. By
\eqref{eq:average_critic_recursion},
$\bar\psi^{t+1}=\bar\psi^t-\eta_{\mathrm c}\bar g^t$.
Define
$\bar h^t\triangleq\frac1N\sum_{n=1}^{N}\nabla F_n(\psi_n^t)$ and
$\mathbf b^t\triangleq\bar h^t-\nabla F(\bar\psi^t)$. Smoothness and Jensen's inequality imply
$\|\mathbf b^t\|_2^2\leq L^2\mathcal E_\psi^t$.
Moreover, Assumption~1 gives
$\mathbb E[\bar g^t\mid\mathcal F_t]=\bar h^t$ and
$\mathbb E[\|\bar g^t-\bar h^t\|_2^2\mid\mathcal F_t]
\leq\sigma_{\mathrm{av}}^2$.
The descent lemma, followed by Young's inequality, therefore gives, for
a sufficiently small $\eta_{\mathrm c}$,
\vspace{-2mm}
\begin{align}
\mathbb E[F(\bar\psi^{t+1})]
\leq{}&
\mathbb E[F(\bar\psi^t)]
-\frac{\eta_{\mathrm c}}{2}
\mathbb E\|\nabla F(\bar\psi^t)\|_2^2
\nonumber\\
&+
c_3\eta_{\mathrm c}L^2\mathbb E[\mathcal E_\psi^t]
+
c_4L\eta_{\mathrm c}^2\sigma_{\mathrm{av}}^2.
\label{eq:app_descent_compact}
\end{align}
Summing \eqref{eq:app_descent_compact}, using the lower bound
$F^\star$, dividing by $\eta_{\mathrm c}T$, and substituting
\eqref{eq:app_consensus_final_compact} yields
\eqref{eq:critic_stationarity_bound} after absorbing numerical and
smoothness constants into $C_1,\ldots,C_4$. Together with
\eqref{eq:app_consensus_final_compact}, this proves the theorem.
\end{IEEEproof}
\vspace{-5mm}
\section{Proof of Corollary~\ref{cor:critic_rate}}
\label{app:proof_cor1}
\begin{IEEEproof}
Uniformity and independence of $J_T$ imply
$\mathbb E\|\nabla F(\bar\psi^{J_T})\|_2^2
=
\frac1T\sum_{t=0}^{T-1}
\mathbb E\|\nabla F(\bar\psi^t)\|_2^2$. 
Substituting
$\eta_{\mathrm c}=c_\eta/\sqrt T$ and
\eqref{eq:tracking_rate_condition} into
\eqref{eq:critic_stationarity_bound} gives, respectively,
$\mathcal O(T^{-1/2})$, $\mathcal O(T^{-1/2})$, $\mathcal O(T^{-1})$, and
$\mathcal O(\log T/T)$ for its four right-hand-side terms, proving
\eqref{eq:critic_convergence_rate}. For fixed $\eta_{\mathrm c}$,
taking the limit superior in \eqref{eq:critic_stationarity_bound}
and using
$\limsup_{T\to\infty}T^{-1}\sum_{t<T}\varepsilon_t^2
\leq\bar\varepsilon^2$
gives the stated stationarity neighborhood.
\end{IEEEproof}


\end{document}